\pdfsuppresswarningpagegroup=1
\documentclass[12pt,a4paper,oneside]{article}

\usepackage[a4paper,left=2cm,right=2cm,top=2.5cm,bottom=2.5cm]{geometry}
\usepackage[utf8]{inputenc}

\usepackage{algorithm}
\usepackage{algorithmic}
\usepackage[sort, numbers]{natbib}
\usepackage{forloop}

\usepackage{graphicx}
\usepackage{placeins} 

\usepackage{url}

\usepackage{amsmath}
\usepackage{breqn}
\usepackage{amsfonts}       
\usepackage{enumitem}
\usepackage{tabularx,booktabs} 
\newcommand{\ra}[1]{\renewcommand{\arraystretch}{#1}}
\usepackage{makecell}
\usepackage{lscape} 

\usepackage{amsthm}
\newtheorem{theorem}{Theorem}
\newtheorem{corollary}[theorem]{Corollary}
\newtheorem{lemma}[theorem]{Lemma}

\newtheorem{remark}{Remark}


\usepackage{ulem}
\normalem

\usepackage{mathtools}

\usepackage[table]{xcolor}

\usepackage{hyperref} 


\newcommand{\ms}{\mathsf{S}}

\newcommand{\mb}{\mathsf{B}}
\newcommand{\umb}{\underline{\mathsf{B}}}
\newcommand{\N}{\mathbb{N}}
\newcommand{\Nhalf}{\frac{1}{2}\mathbb{N}}
\newcommand{\J}{\mathcal{J}}
\newcommand{\CJ}{\mathcal{C}_\mathcal{J}}
\newcommand{\CJB}{\mathcal{C}_{\mathcal{J}_{\mb}}}
\newcommand{\CJBk}{\mathcal{C}_{\mathcal{J}_{\mb_k}}}
\newcommand{\CJBkm}{\mathcal{C}_{\mathcal{J}_{\mb_{k-1}}}}
\newcommand{\R}{\mathbb{R}}

\newcommand{\tk}{\theta_{k}}
\newcommand{\tkh}{\theta_{k+\frac{1}{2}}}
\newcommand{\tkp}{\theta_{k+1}}
\newcommand{\tkm}{\theta_{k-1}}
\newcommand{\gk}{\gamma_{k}}

\newcommand{\nJ}{\nabla\J}

\newcommand{\dtk}{\Delta\tk}

\newcommand{\dgk}{\Delta g_k}
\newcommand{\dgkh}{\Delta g_{\mb_k}}
\newcommand{\dtkh}{\Delta \theta_{\mb_k}}

\newcommand{\esp}[1]{\mathbb{E}\left[#1\right]}

\newcommand{\espcondTot}[1]{\mathbb{E}\left[#1\middle|\umb_{k-1}\right]}

\begin{document}

\title{Second-order step-size tuning of SGD for non-convex optimization}
\author{\textbf{Camille Castera}\footnote{Corresponding author: \texttt{camille.castera@protonmail.com}}\\
    CNRS - IRIT,\\ Universit\'e de Toulouse, \\
    Toulouse, France
    \and
    \textbf{J\'er\^ome Bolte}$^\dagger$ \\
    Toulouse School of Economics\\ Université de Toulouse\\
    Toulouse, France
    \and
    \textbf{Cédric Févotte}$^\dagger$ \\
    CNRS - IRIT,\\ Universit\'e de Toulouse, \\
    Toulouse, France
    \and
    \textbf{Edouard Pauwels}$^\dagger$ \\
    CNRS - IRIT,\\ Universit\'e de Toulouse, \\
    DEEL, IRT Saint Exupery\\
    Toulouse, France
}
\date{}

    \maketitle
        
    \renewcommand*{\thefootnote}{$^\dagger$}
    \footnotetext[1]{Last three authors are listed in alphabetical order. }
    \renewcommand*{\thefootnote}{\arabic{footnote}}
    \setcounter{footnote}{0} 
    \begin{abstract}
    In view of a direct and simple improvement of vanilla SGD, this paper presents a fine-tuning of its step-sizes in the mini-batch case. For doing so, one estimates curvature, based on a local quadratic model and using only noisy gradient approximations. One obtains a new stochastic first-order method (Step-Tuned SGD), enhanced by second-order information, which can be seen as a stochastic version of the classical Barzilai-Borwein method. Our theoretical results ensure almost sure convergence to the critical set and we provide
    convergence rates. Experiments on deep residual network training illustrate the favorable properties of our approach. For such networks we observe, during training, both a sudden drop of the loss and an improvement of test accuracy at medium stages, yielding better results than SGD, RMSprop, or ADAM.
    \end{abstract}

    \section{Introduction}
    In the recent years, machine learning has generated a growing need for methods to solve non-convex optimization problems. In particular, the training of deep neural networks (DNNs) has received tremendous attention. Designing methods for this purpose is particularly difficult as one deals with both expensive function evaluations and limited storage capacities. This explains why stochastic gradient descent (SGD) remains the central algorithm in deep learning (DL). 
    It consists in the iterative scheme,
    \begin{equation}
        \tkp = \tk - \gamma_k \nabla\J_{\mb_k}(\theta_k),
    \end{equation}
    where $\J$ is the function to minimize (usually the empirical loss) parameterized by $\theta\in\R^P$ (the weights of the DNN), $\nabla\J_{\mb_k}(\theta_k)$ is a stochastic estimation of the gradient of $\J$ (randomness being related to the sub-sampled mini-batch ${\mb_k}$), and $\gamma_k>0$ is a step-size whose choice is critical in terms of empirical performance.
    
    In order to improve the basic SGD method, a common practice is to use adaptive methods \citep{duchi2011adaptive,tieleman2012lecture,kingma2014adam}. They act as preconditioners, reducing the importance of the choice of the sequence of step-sizes $(\gamma_k)_{k\in\N}$. 
   This paper focuses instead exclusively on the step-size issue: how can we take advantage of curvature information of non-convex landscapes in a stochastic context in order to design step-sizes adapted to each iteration?

    Our starting point to answer this question is an infinitesimal second-order variational model along the gradient direction. The infinitesimal feature is particularly relevant in DL since small steps constitute standard practice in training due to sub-sampling noise. Second-order information is approximated with first-order quantities using finite differences.  
    In deterministic (full-batch) setting, our method  corresponds to a non-convex version of the  Barzilai-Borwein (BB) method \citep{barzilai1988two,dai2002modified,XIAO20102986,Biglari2013ScalingOT} and is somehow a discrete non-convex adaption of the continuous gradient system in \citet{alvarez2004steepest}. It is also close to earlier work \cite{raydan1997barzilai},  with the major difference that our algorithm is supported by a variational model. This is essential to generalize the method to accommodate noisy gradients,  providing a convexity test similar to those in \citet{kafaki2013,curtis2016handling}.
    
    Our main contribution is a fine step-size tuning method which accelerates stochastic gradient algorithms, it is built on a strong geometrical principle: step-sizes are deduced from a carefully derived discrete approximation of a curvature term of the expected loss. We provide general convergence guarantees to critical points and rates of convergence. Extensive computations on DL problems show that our method is particularly successful for residual networks \citep{he2015}. In that case, we observe a surprising phenomenon: in the early training stage the method shows similar performances to standard DL algorithms (SGD, ADAM, RMSprop), then at medium stage, we observe simultaneously a sudden drop of the training loss and a notable increase in test accuracy.

    To summarize, our contributions are as follows:\\
    --\ Exploit the vanishing step-size nature of DL training to use infinitesimal second-order optimization for fine-tuning the step-size at each iteration. \\
    --\ Use our geometrical perspective to discretize and adapt the method to noisy gradients  despite strong non-linearities.  \\
    --\ Prove the convergence of the proposed algorithm and provide rates of convergence for twice-differentiable non-convex functions either under boundedness assumption or under Lipschitz-continuity assumptions (see Theorem~\ref{thm::mainres} and Corollary~\ref{cor::globLipschitz}).
    \\
    --\ Show that our method has remarkable practical properties, in particular when training residual networks in DL, for which one observes an advantageous ``drop down" of the loss during the training phase.

    \paragraph{Structure of the paper.} A preliminary deterministic (i.e., full-batch) algorithm is derived in Section~\ref{sec::deterministic}. We then build a stochastic mini-batch variant in Section~\ref{sec::mini-batch}, which is our core contribution. Theoretical results are stated in Section~\ref{sec::theory} and DL experiments are conducted in Section~\ref{sec::numDL}. 
    
    \section{Related work\label{sec::relatedwork}}
    
    Methods using second-order information for non-convex optimization have been actively studied in the last years, both for deterministic  and stochastic applications, see, e.g.,  \citet{royer2017complexity,carmon2017convex,allenzhu2017natasha,krishnan2017neumann,MartensG15,liu2017noisy,curtis2019exploiting}. 
    
    BB-like methods are very sensitive to noisy gradient estimates. Most existing stochastic BB algorithms \citep{NIPS2016_6286,LIANG2019197,8945980} overcome this issue with stabilization methods in the style of SVRG \citep{johnson2013accelerating}, which allows to prescribe a new step-size at every epoch only (i.e., after a full pass over the data). Doing so, one cannot capture variations of curvature within a full epoch, and one is limited to using absolute values to prevent negative step-sizes caused by non-convexity. On the contrary, our stochastic approximation method can adapt to local curvature every two iterations. Regarding the utilization of flatness and concavity of DL loss functions, the AdaBelief algorithm of \citet{zhuang2020adabelief} is worth mentioning. The latter uses the difference between the current stochastic gradient estimate and an average of past gradients, this difference being used to prescribe a vector step-size in the style of ADAM. In comparison, our method uses scalar step-sizes and aims to capture subtle variations as it computes a stabilized difference between consecutive gradient estimates \textit{before} averaging.
     
    There are few techniques to analyze stochastic methods in non-convex settings. An important category is the ODE machinery used for SGD \citep{davis2018stochastic,bolte2020}, ADAM \citep{barakat2018convergence} and INNA \citep{castera2019inertial}. In this paper, we use instead direct and more traditional arguments, such as in \citet{li2019convergence} in the context of DL.

    \section{Design of the algorithm}
    We first build a preliminary algorithm based upon a simple second-order variational model. We then adapt this algorithm to address mini-batch stochastic approximations.
    \begin{figure}[t]
     \centering
     \includegraphics[width=0.7\linewidth,trim={1.5cm 0cm 1.5cm  0cm},clip]{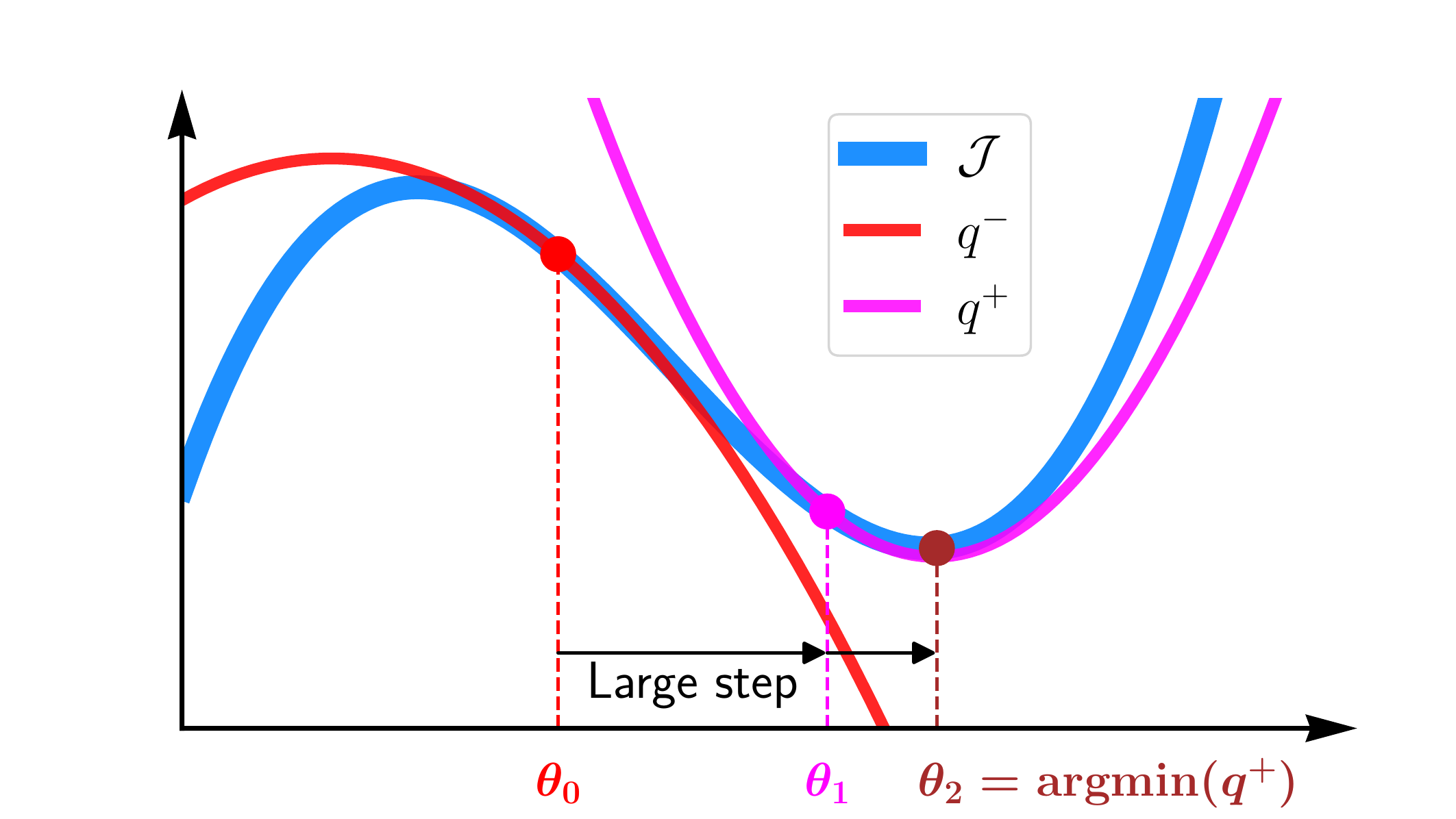}

        \caption{Illustration of negative and positive curvature steps. 
        The function $q^-$ represents the variational model at $\theta_0$, with negative curvature. Concavity suggests to take a large step to reach $\theta_1$.
        Then, at $\theta_1$, the variational model $q^+$ has positive curvature and can be minimized to obtain $\theta_2$. \label{fig::Illus}}
    \end{figure}
    \subsection{Deterministic full-batch algorithm}\label{sec::deterministic}
    We start with the following variational considerations.
   \subsubsection{Second-order infinitesimal step-size tuning.} Given a positive integer $P$, assume that $\J $ is a twice-differentiable function, denote $\nabla\J$ and $\nabla^2\J$ the gradient and the Hessian matrix of $\J$ respectively. Let $\theta\in\R^P$. Given an update direction $d\in\R^P$, a natural strategy is to choose $\gamma\in\R$ that minimizes $\J(\theta + \gamma d)$. Let us approximate $\gamma\mapsto\J(\theta+\gamma d)$ around $0$ with a Taylor expansion,
    \begin{equation}\label{eq::quadmodel}
        q_d(\gamma) \stackrel{\mathrm{def}}{=} \J(\theta) + \gamma\langle\nJ(\theta),d\rangle + \frac{\gamma^2}{2} \langle\nabla^2 \J(\theta)d,d\rangle.
    \end{equation} 
    If the curvature term $\langle\nabla^2 \J(\theta)d,d\rangle$ is positive, then $q_d$ has a unique minimizer at,
    \begin{equation}
        \gamma^\star=-\frac{\langle\nJ(\theta),d\rangle}{\langle\nabla^2 \J(\theta)d,d\rangle}.\label{eq::minofq}
    \end{equation}
    On the contrary when $\langle \nabla^2 \J(\theta)d,d\rangle\leq 0$, the infinitesimal model $q_d$ is concave (or equivalently $\J$ is locally concave in the direction $d$) which suggests to take a large step-size. These considerations are illustrated on Figure~\ref{fig::Illus}.
    
     \subsubsection{Tuning gradient descent.} In order to tune gradient  descent we choose the direction $d=-\nJ(\theta)$ which gives,
    \begin{equation}\label{eq::ACstep}
        \gamma(\theta) \stackrel{\mathrm{def}}{=} \frac{\Vert\nJ(\theta)\Vert^2}{\langle \nabla^2 \J(\theta)
        \nJ(\theta),\nJ(\theta)\rangle}.
    \end{equation} 
    According to our previous considerations, an ideal iterative process $\tkp = \tk -\gk\nabla\J(\tk)$
     would use $\gk=\gamma(\tk)$ when $\gamma(\tk)>0$. But for computational reasons and discretization purposes, we shall rather seek a step-size $\gamma_k$ such that, $\gk \simeq \gamma(\theta_{k-1})$ (again when $\gamma(\theta_{k-1})>0$). Let us assume that, for $k\geq1$,  $\theta_{k-1},\gamma_{k-1}$ are known and let us approximate the quantity,
      \begin{equation}
        \gamma(\theta_{k-1}) = \frac{\Vert\nJ(\theta_{k-1})\Vert^2}{\langle \nabla^2 \J(\theta_{k-1})
         \nJ(\theta_{k-1}),\nJ(\theta_{k-1})\rangle},
    \end{equation}
    using only first-order objects. 
    We rely on two identities,
     \begin{align}
            \Delta \tk &\stackrel{\mathrm{def}}{=} \tk - \tkm= -\gamma_{k-1}\nabla\J(\tkm),\\
            \Delta g_k &\stackrel{\mathrm{def}}{=} \nJ(\tk)-\nJ(\tkm)\simeq -\gamma_{k-1}\CJ(\tkm), \label{eq::deltaGk} 
    \end{align}
   where $\CJ(\theta) \stackrel{\mathrm{def}}{=} \nabla^2 \J(\theta)\nJ(\theta)$ and \eqref{eq::deltaGk} is obtained by Taylor's formula. Combining the above, we are led to consider the following step-size,
    \begin{equation}\label{eq::BB1}
        \gamma_k =\begin{cases}
         \frac{\Vert \dtk\Vert^2}{\langle\dtk,\dgk\rangle} \quad &\text{if $\langle\dtk,\dgk\rangle >0$}\\
        \quad\nu  &\text{otherwise}
        \end{cases},
    \end{equation}
    where $\nu>0$ is an hyper-parameter of the algorithm representing the large step-sizes to use in locally concave regions. 
    
    The resulting full-batch non-convex optimization method is Algorithm~\ref{alg::FB}, in which $\alpha$ is a so-called {\em learning rate} or {\em scaling factor}. 
    This algorithm is present in the literature under subtle variants  \citep{raydan1997barzilai,dai2002modified,XIAO20102986,Biglari2013ScalingOT}. It may be seen as a non-convex version of the BB method (originally designed for strongly convex functions) as it coincide with the BB step-size when $\langle\dtk,\dgk\rangle$ is positive.
    In the classical optimization literature the BB step-size and its variants are often combined with line-search procedures which is impossible in our large-scale DL context. This is why we replace the line-search by a scaling factor $\alpha$, present in most DL optimizers and which generally requires tuning. Our purpose is not however to get rid of hyper-parameters pre-tuning but rather to combine this $\alpha$ with an automatic fine tuning able to capture the local variations in $\J$. 
    Although Algorithm~\ref{alg::FB} is close to existing methods, the interest of our variational viewpoint is the characterization of the underlying geometrical mechanism supporting the algorithm, which is key in designing an efficient stochastic version of Algorithm~\ref{alg::FB} in Section~\ref{sec::mini-batch}.

    \begin{algorithm}[t]
        \caption{Full-batch preliminary algorithm
            }\label{alg::FB}
        \begin{algorithmic}[1]

            \STATE{\bfseries Input:} $\alpha>0$, $\nu>0$
            \STATE Initialize  $\theta_0\in\R^P$

            \STATE $\theta_1 = \theta_0-\alpha\nJ(\theta_0)$
            \FOR{$k=1,\ldots$}
            
            \STATE $\dgk = \nJ(\tk)- \nJ(\theta_{k-1})$
            \STATE $\dtk = \tk-\theta_{k-1}$
            \IF{$\langle\dgk,\dtk\rangle>0\textbf{}$}
            \STATE $\gamma_k = \frac{\Vert\dtk\Vert^2}{\langle\dgk,\dtk\rangle}$ 
            \ELSE
            \STATE $\gamma_k=\nu$
            \ENDIF
            \STATE $\theta_{k+1} = \tk - \alpha \gamma_k \nJ(\tk)$
            \ENDFOR
            
        \end{algorithmic}
    \end{algorithm}

    \begin{figure}[t]
            \centering
            \includegraphics[trim={0.2cm 0 0.2cm 0.2cm},clip ,width=0.65\linewidth]{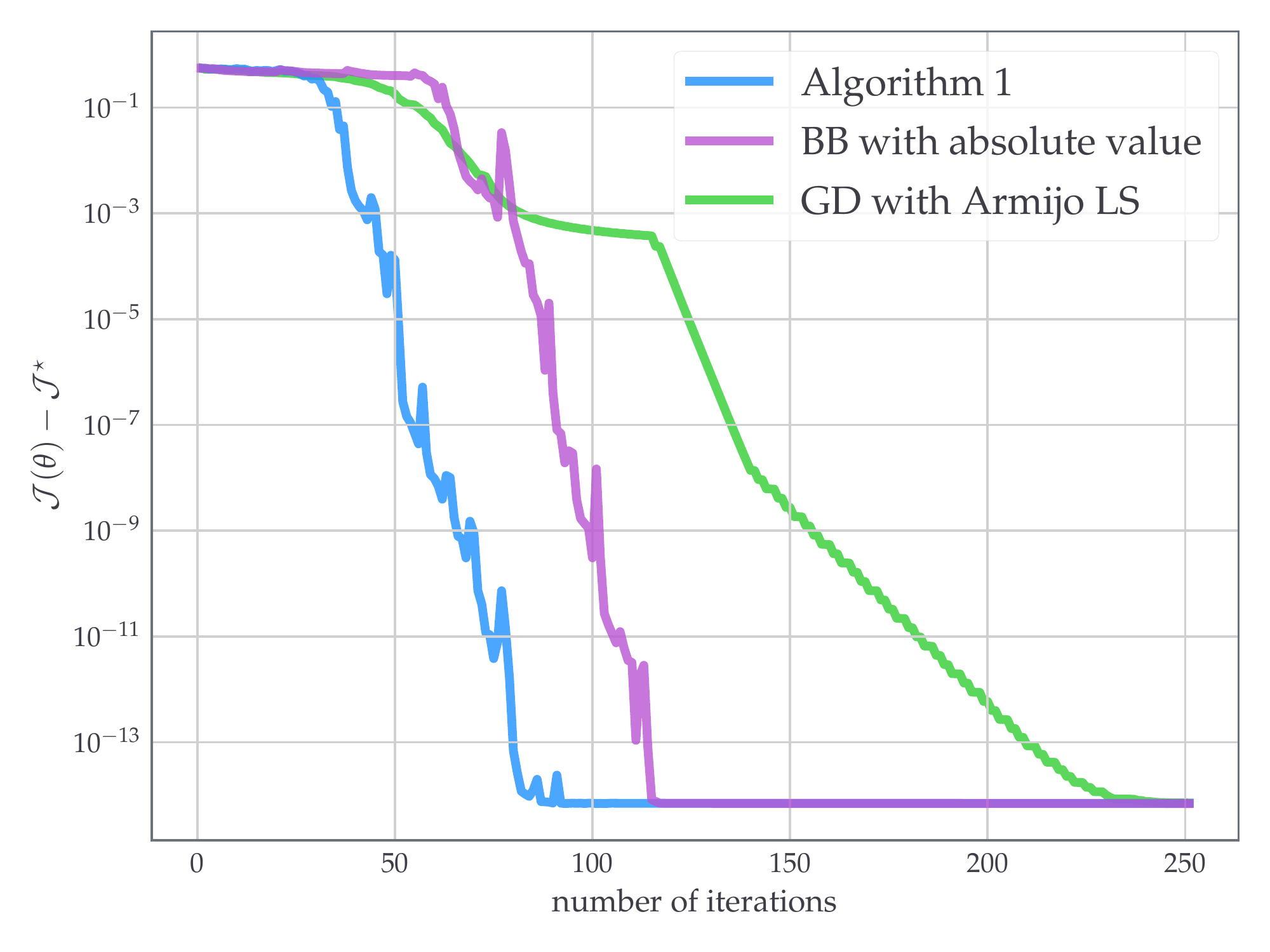}
        \caption{Values of the loss function $\J(\theta)$ against iterations (each corresponding to a gradient step) for the synthetic non-convex regression problem detailed in Section~\ref{sec::numCD} of the Supplementary. The optimal value $\J^\star$ is unknown and is estimated by taking the best value obtained among all algorithms after $10^{5}$ iterations. \label{fig::bbvsnc}}
    \end{figure}

        \subsubsection{Illustrative experiment.} Before presenting the stochastic version, we illustrate the interest of exploiting negative curvature through the \textit{large-step} parameter $\nu$ with a synthetic experiment inspired from \citet{carmon2017convex}. We apply Algorithm~\ref{alg::FB} to a non-convex regression problem of the form $\min_{\theta\in\R^P} \phi(A\theta - b)$ where $\phi$ is a non-convex real-valued function (see Section~\ref{sec::numCD} of the Supplementary). We compare Algorithm~\ref{alg::FB} with the methods \`a la BB where absolute values are used when the step-size is negative\footnote{For a fair comparison we implement this method with the scaling-factor $\alpha$ of Algorithm~\ref{alg::FB}.} (see, e.g.,  \citet{NIPS2016_6286,LIANG2019197} in stochastic settings) and with Armijo's line-search gradient method. As shown on Figure~\ref{fig::bbvsnc}, Algorithm~\ref{alg::FB} efficiently exploits local curvature and converges much faster than other 
        methods.

    \subsection{Stochastic mini-batch algorithm\label{sec::mini-batch}}
    We wish to adapt Algorithm~\ref{alg::FB} in cases where gradients can only be approximated through mini-batch sub-sampling. This is necessary in particular for DL applications.
    \subsubsection{Mini-batch sub-sampling.}
    We assume the following sum-structure of the loss function, for $N\in\N_{>0}$,
    \begin{equation}\label{eq::mainProblem}
        \J(\theta) = \frac{1}{N}\sum_{n=1}^{N} \J_n(\theta),
    \end{equation} 
    where each $\J_n$ is a twice continuously-differentiable function. Given any fixed subset $\mb \subset \{1,\ldots, N\}$, we define the following quantities for any $\theta \in \R^P$,
    \begin{align}\label{eq::approx}
        \J_{\mb}(\theta) &\stackrel{\mathrm{def}}{=} \frac{1}{\vert\mb\vert}\sum_{n\in\mb} \J_{n}(\theta),\\ \nJ_{\mb}(\theta) &\stackrel{\mathrm{def}}{=} \frac{1}{\vert\mb\vert}\sum_{n\in\mb} \nJ_{n}(\theta),
    \end{align}
    where $|\mb|$ denotes the number of elements of the set $\mb$.
    Throughout this paper we will consider independent copies of a random subset $\ms \subset \{1,\ldots, N\}$ referred to as mini-batch. The distribution of this subset is fixed throughout the paper and taken such that the expectation over the realization of $\ms$ in \eqref{eq::approx} corresponds to the empirical expectation in \eqref{eq::mainProblem}. This is valid for example if $\ms$ is taken uniformly at random over all possible subsets of fixed size. As a consequence, we have the identity $\J = \mathbb{E}[\J_{\ms}]$, where $\mathbb{E}$ denotes the expectation operator, here taken over the random draw of $\ms$. This allows to interpret mini-batch sub-sampling as a stochastic approximation process since we also have $\nJ = \mathbb{E}[\nJ_{\ms}]$.
    
     Our algorithm is of stochastic gradient type where stochasticity is related to the randomness of mini-batches. We start with an initialization $\theta_0 \in \R^P$, and a sequence of {i.i.d.} random mini-batches $(\mb_k)_{k\in\N}$, whose common distribution is the same as $\ms$. The algorithm produces a random sequence of iterates $(\theta_k)_{k\in \N}$. For $k\in \N$, $\mb_k$ is used to estimate an update direction $-\nJ_{\mb_k}(\theta_k)$ which is used in place of $-\nJ(\theta_k)$ in the same way as gradient descent algorithm.

    \begin{algorithm}[t]
        \caption{Step-Tuned SGD}\label{alg::Steptuned}
        \begin{algorithmic}[1]
            
            \STATE{\bfseries Input:} $\alpha>0$, $\nu>0$
            \STATE{\bfseries Input:} $\beta\in[0,1]$, $\tilde{m}>0$, $\tilde{M}>0$, $\delta\in(0
            ,1/2)$
            \STATE{\bfseries Initialize} $\theta_0\in\R^P$, $G_{-1} = \mathbf{0}_P$, $\gamma_0=1$
            \STATE{\bfseries Draw} independent random mini-batches $(\mb_k)_{k\in\N}$.
            \FOR{$k=0, 1,\ldots$}
            \STATE $\theta_{k+\frac{1}{2}} = \theta_{k} - \frac{\alpha}{(k+1)^{1/2+\delta}}\gamma_{k} \nJ_{\mb_k}(\theta_{k})$
            \STATE $\theta_{k+1}=\theta_{k+\frac{1}{2}} - \frac{\alpha}{(k+1)^{1/2+\delta}}\gamma_{k} \nJ_{\mb_k}(\theta_{k+\frac{1}{2}})$
            
            \STATE $\dtkh=\theta_{k+\frac{1}{2}}-\theta_{k}$
            \STATE $\dgkh=\nJ_{\mb_{k}}(\theta_{k + \frac{1}{2}})- \nJ_{\mb_{k}}(\theta_{k})$
            \STATE $G_{k}=\beta G_{k-1} + (1-\beta) \dgkh$
            \STATE $\hat{G}_{k}=G_{k}/(1-\beta^{k+1})$
            \IF{$\langle \hat{G}_{k},\dtkh\rangle>0$}
            \STATE $\gamma_{k+1}=\frac{\Vert\dtkh\Vert^2}{\langle \hat{G}_{k},\dtkh\rangle}$
            \ELSE
            \STATE $\gamma_{k+1}=\nu$
            \ENDIF 
            \STATE $\gamma_{k+1}=\min(\max(\gamma_{k+1},\tilde{m}),\tilde{M})$
            \ENDFOR
        \end{algorithmic}
    \end{algorithm}

    \subsubsection{Second-order tuning of mini-batch SGD: Step-Tuned SGD.}
    Our goal is to devise a step-size strategy, based on the variational ideas developed earlier and on the quantity $\CJ$, in the context of mini-batch sub-sampling. First observe that for $\theta\in\R^P$,
     \begin{equation}
        \CJ(\theta) = \nabla^2\J(\theta)\nabla\J(\theta) = \nabla\left(\frac{1}{2} \Vert\nabla\J(\theta)\Vert^2\right).
     \end{equation}  
     So rewriting $\J$ as an expectation,  
     \begin{equation}
        \CJ(\theta) = \nabla\left(\frac{1}{2} \Vert\mathbb{E}\left[\nabla\J_{\ms}(\theta)\right]\Vert^2\right),
     \end{equation}
    where $\ms$ denotes like in the previous paragraph a random subset of $\left\{1,\ldots,N\right\}$, or mini-batch. This suggests the following estimator, 
    \begin{equation} 
        \CJB(\theta) \stackrel{\mathrm{def}}{=} \nabla\left(\frac{1}{2} \Vert\nabla\J_{\mb}(\theta)\Vert^2\right) =\nabla^2\J_{\mb}(\theta)\nabla\J_{\mb}(\theta), \label{eq::CJBk}
    \end{equation}
   to build an infinitesimal model as in \eqref{eq::ACstep}, where  $\mb\subset\left\{1,\ldots,N\right\}$ and $\theta\in\R^P$.

    Like in the deterministic case we approximate the new target \eqref{eq::CJBk} with a Taylor expansion of $\J_{\mb}$ between two iterations. We obtain for any $\mb \subset \{1,\ldots,N\}$, $\theta\in\R^P$, and small $\gamma>0$
    \begin{equation} 
        -\gamma\CJB(\theta)\simeq \nJ_{\mb}(\underbrace{\theta - \gamma \nJ_{\mb}(\theta)}_{\text{next iterate}})  - \nJ_{\mb}(\theta).
        \label{eq::CJBkDiffGrad}
    \end{equation}
    This suggests to use each mini-batch twice and compute a difference of gradients every two iterations.\footnote{There is also the possibility of computing additional estimates as \citet{schraudolph2007stochastic} previously did for a stochastic BFGS algorithm, but this  would  double the  computational cost.} We adopt the following convention, at iteration $k\in\N$, the random mini-batch $\mb_k$ is used to compute a stochastic gradient, $\nJ_{\mb_{k}}(\theta_k)$ and at iteration $k+\frac{1}{2}$ the same mini-batch is used to compute another stochastic gradient $\nJ_{\mb_{k}}(\theta_{k+\frac{1}{2}})$, for a given $\theta_{k+\frac{1}{2}}$. Let us define,
    \begin{equation} 
        \dgkh \stackrel{\mathrm{def}}{=} \nJ_{\mb_{k}}(\theta_{k+\frac{1}{2}})- \nJ_{\mb_{k}}(\tk),\label{eq::MBdiscretization}
    \end{equation}
    thereby $\dgkh$ forms an approximation of $-\gamma_{k}\CJBk(\theta_k)$ that we can use to compute the next step-size $\gamma_{k+1}$. We define the difference between two iterates accordingly,
    \begin{equation} 
        \dtkh \stackrel{\mathrm{def}}{=} \theta_{k+\frac{1}{2}}- \tk.\label{eq::thetaMB}
    \end{equation}

    Finally, we stabilize the approximation of the target quantity in \eqref{eq::CJBk} by using an exponential moving average of the previously computed $(\Delta g_{\mb_j})_{j\leq k}$. More precisely, we recursively compute $G_k$ defined by,
    \begin{equation}\label{eq::movingAverage}
        G_{k} = \beta G_{k-1} + (1-\beta) \dgkh.
    \end{equation}
    We finally introduce $\hat{G}_{k}=G_{k}/(1-\beta^{k+1})$ to debias the estimator $G_k$ such that the sum of the weights in the average equals $1$. This mostly impacts the first iterations as $\beta^{k+1}$ vanishes quickly; a similar process is used in ADAM~\citep{kingma2014adam}.

    Altogether we obtain our main method: Algorithm~\ref{alg::Steptuned}, which we name \textit{Step-Tuned SGD}, as it aims to tune the step-size every two iterations and not at every epoch like most stochastic BB methods.
    Note that the main idea behind Step-Tuned SGD remains the same than in the deterministic setting: we exploit the curvature properties of $\J_{\mb_k}$ through the quantities $\langle\hat{G}_k,\dtkh\rangle$ to devise our method. Compared to Algorithm~\ref{alg::FB}, the iteration index is shifted by 1 so that the estimated step-size $\gamma_{k+1}$ only depends on mini-batches $\mb_0$ up to $\mb_k$ and is therefore conditionally independent of $\mb_{k+1}$. This conditional dependency structure is crucial to obtain the convergence guarantees given in Section~\ref{sec::theory}.
    \begin{figure}[t]
        \centering
        \centering
        \includegraphics[width=0.65\linewidth]{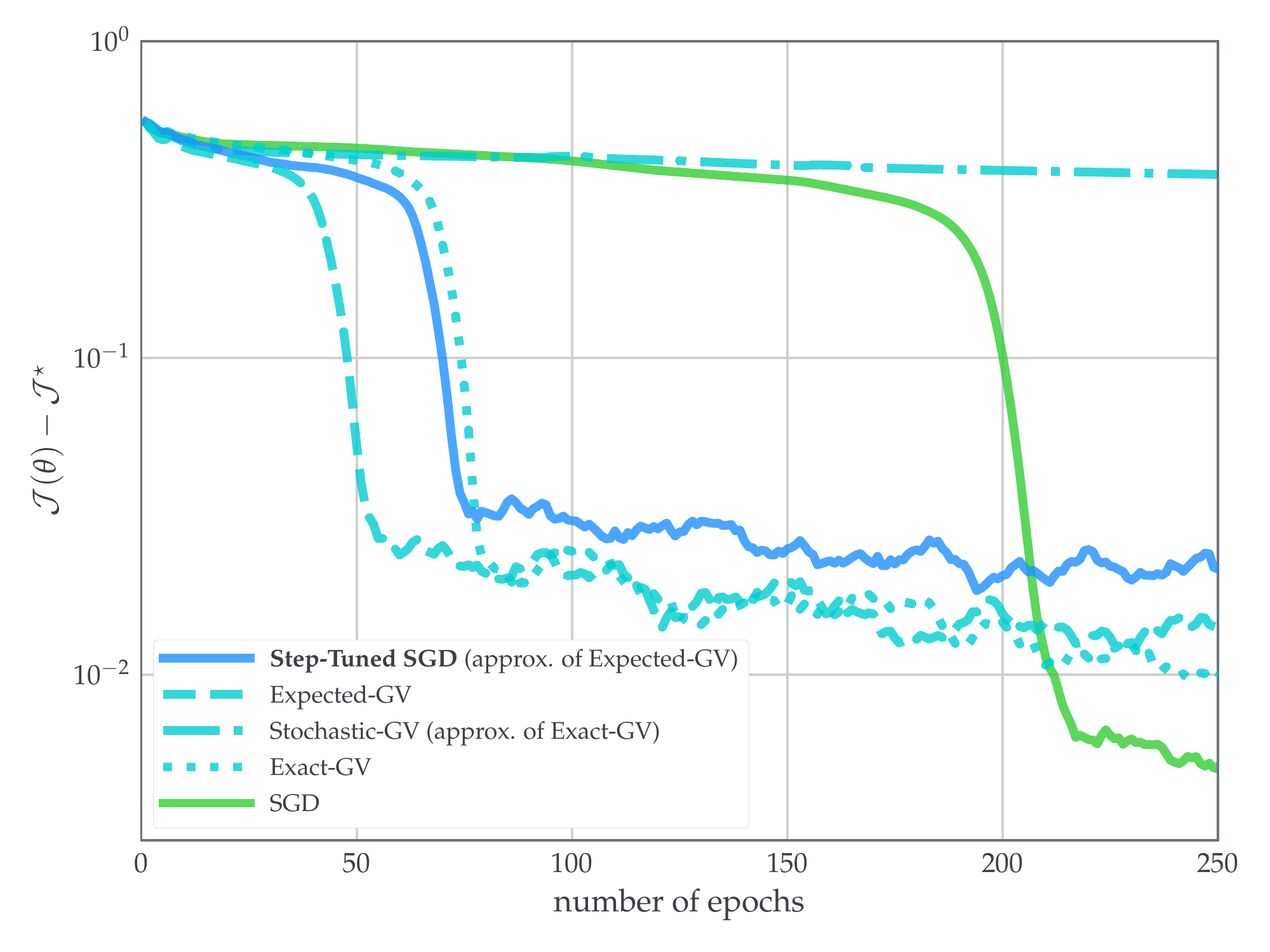}
        \caption{Values of the loss function against epochs for non-convex regression: heuristic methods (dashed lines) of Section~\ref{sec::heuristic} are compared with Step-Tuned SGD (plain blue). SGD serves as a reference to evidence the fast drop down effect of other methods. The additional computational cost of Expected-GV and Exact-GV is ignored as these methods are here only for illustration purposes (see Section~\ref{sec::heuristic}).\label{fig::CD_MB}}
    \end{figure}
    \begin{remark}
        It is worth precising that like most methods, Algorithm~\ref{alg::Steptuned} does not alleviate the need of tuning the scaling factor $\alpha$. The choice of this parameter remains indeed important in most practical applications. The purpose of Step-Tuned SGD is rather to speed up SGD by \textit{fine-tuning} the step-size at each iteration (through the introduction of $\gamma_k$). This is analogous to the BB methods for deterministic applications which often accelerate algorithms but must be stabilized with line-search strategies. To the best of our knowledge, in comparison to the \textit{epoch-wise} BB methods \citep{NIPS2016_6286,LIANG2019197}, Algorithm~\ref{alg::Steptuned} is the first method that manages to mimic the \textit{iteration-wise} behavior of deterministic BB methods for mini-batch applications.
    \end{remark}
    
    \subsection{Heuristic construction of Step-Tuned SGD}\label{sec::heuristic}
    In this section we present the main elements which led us to the step-tuned method of Algorithm \ref{alg::Steptuned} and discuss its hyper-parameters. Throughout this paragraph, the term \textit{gradient variation} (GV)  denotes the local variations of the gradient; it is simply the difference of consecutive gradients along a sequence. Our heuristic discussion blends discretization arguments and experimental considerations. We use the  non-convex regression experiment of Section~\ref{sec::deterministic} as a test for our intuition and algorithms. A complete description of the methods below is given in Section~\ref{supp::otherAlgs}, we only sketch the main ideas.
    \subsubsection{First heuristic experiment with exact GVs.} Assume that along any ordered collection $\theta_1,\ldots,\theta_k \in \R^P$, one is able to evaluate the GVs of $\J$, that is, terms of the form $\nabla \J(\theta_{i}) - \nabla \J(\theta_{i-1})$. Recall that we denote $\Delta\theta_i = \theta_i-\theta_{i-1}$, the difference between two consecutive iterates, for all $i \geq 1$. In the deterministic (i.e., noiseless) setting, Algorithm \ref{alg::FB} is based on these GVs, indeed,
    \begin{align}
        \label{eq::algoDeterministic}
        \theta_{k+1} = \theta_k -  \frac{ \alpha\|\dtk\|^2}{\left\langle \nabla \J(\theta_k) - \nabla \J(\theta_{k-1}), \dtk\right\rangle} \nabla \J(\theta_k),
    \end{align}
    whenever the denominator is positive. Given our sequence of independent random mini-batches $(\mb_k)_{k\in \N}$, a heuristic stochastic approximation version of this recursion could be as follows,
    \begin{align}
    \label{eq::algoExpe1}
        \theta_{k+1} = \theta_k -  \frac{ \alpha_k\|\dtk\|^2}{\left\langle \nabla \J(\theta_k) - \nabla \J(\theta_{k-1}), \dtk\right\rangle} \nabla \J_{\mb_k}(\theta_k),
        \tag{Exact-GV}
    \end{align}
    where the difference between \eqref{eq::algoDeterministic} and \ref{eq::algoExpe1} lies in the randomness of the search direction and the dependency of the scaling factor $\alpha_k$ which aims to moderate the effect of noise (generally $\alpha_k\to 0$). As shown in Figure~\ref{fig::CD_MB} the recursion \ref{eq::algoExpe1} is much faster than SGD especially for the first $\sim 150$ epochs which is often the main concern for DL applications. Indeed, although SGD achieves a smaller value of $\J$ after a larger number of iterations (due to other methods using larger step-sizes), this happens when the value of $\J$ is already low.\footnote{Step-Tuned SGD achieves the same small level of error as SGD when doing additional epochs thanks to the decay schedule present in Algorithm~\ref{alg::Steptuned}.} Overall the quantity \ref{eq::algoExpe1} seems very promising.
    
    Yet, for large sums, the gradient-variation in \ref{eq::algoExpe1} is too computationally expensive. One should therefore adapt \eqref{eq::algoExpe1} to the mini-batch context. A direct adaption would simply consist in the algorithm,
    \begin{align}
    \label{eq::algoExpe1Sto}
        \theta_{k+1} =  \theta_k -  \frac{ \alpha_k\|\dtk\|^2}{\left\langle \nabla \J_{\mb_k}(\theta_k) - \nabla \J_{\mb_{k-1}}(\theta_{k-1}), \dtk\right\rangle} \nabla \J_{\mb_k}(\theta_k),
        \tag{Stochastic-GV}
    \end{align}
    where mini-batches are used both to obtain a search direction and to approximate the GV.
    For this naive approach, we observe a dramatic loss of performance, as illustrated in Figure~\ref{fig::CD_MB}. This reveals the necessity to use accurate stochastic ap\-proxi\-mation of GVs.
    
    \subsubsection{Second heuristic experiment using expected gradient variations.} Towards a more stable approximation of the GVs, we consider the following recursion,
    \begin{align}
    \label{eq::algoExpe2}
        \theta_{k+1} = \theta_k -  \frac{ \alpha_k\|\dtk\|\| \nabla\J_{\mb_{k-1}}(\theta_{k-1}) \|}{\left\langle -\mathbb{E}[\mathcal{C}_{\mathcal{J}_{\ms}}(\theta_{k-1})], \dtk \right\rangle} \nabla \J_{\mb_k}(\theta_k),
        \tag{Expected-GV}
    \end{align}
    where $\mathcal{C}_{\mathcal{J}_{\mb}}$ is defined in \eqref{eq::CJBk} for any $\mb \subset \{1,\ldots,N\}$ and the expectation taken is over the independent draw of $\ms \subset \{1,\ldots,N\}$, conditioned on the other random variables. The main difference with \ref{eq::algoExpe1} is the use of expected GVs instead of  exact GVs, the minus sign ensures a coherent interpretation in term of GVs. The numerator in \ref{eq::algoExpe2} is also modified to ensure homogeneity of the steps with the other variations of the algorithm. Indeed $\mathcal{C}_{\mathcal{J}_{\mb}}(\theta_k)$ approximates a difference of gradients modulo a step-size, see \eqref{eq::CJBkDiffGrad}. As illustrated in Figure~\ref{fig::CD_MB}, the recursion \ref{eq::algoExpe2} provides performances comparable (and even superior) to \ref{eq::algoExpe1}, and in particular for both algorithms, we also recover the loss drop which was observed in the deterministic setting.

Algorithm~\ref{alg::Steptuned} is nothing less than an approximate version of \ref{eq::algoExpe2} which combines a double use of mini-batches with a moving average. Indeed, from \eqref{eq::CJBkDiffGrad}, considering the expectation over the random draw of $\ms$, for any $\theta \in \R^P$ and small $\gamma > 0$, we have,
\begin{align}
    -\gamma \mathbb{E}[\mathcal{C}_{\mathcal{J}_{\ms}}(\theta)] \simeq\; & \mathbb{E}\left[\nJ_\ms\left(\theta - \gamma \nJ_{\ms}(\theta)\right)  - \nJ_{\ms}(\theta)\right].
    \label{eq::approxCJdiffGrad}
\end{align}
 The purpose of the term $\hat{G}_{k}$ in Algorithm~\ref{alg::Steptuned} is precisely to mimic this last quantity. The experimental results of Algorithm~\ref{alg::Steptuned} are very similar to those of \ref{eq::algoExpe2}, see Figure~\ref{fig::CD_MB}. 
 
Let us conclude by saying that the above considerations on gradient variations (GVs) led us to propose Algorithm~\ref{alg::Steptuned} as a possible stochastic version of Algorithm~\ref{alg::FB}. The similarity between the performances of the two methods and the underlying geometric aspects (see Section~\ref{sec::mini-batch}) were also major motivations.

    \subsubsection{Parameters of the algorithm.}
    Algorithm~\ref{alg::Steptuned} contains more hyper-parameters than in the deterministic case,  but we recommend keeping the default values for most of them.\footnote{Default values: $(\nu,\beta,\tilde{m},\tilde{M},\delta) = (2,0.9,0.5,2,0.001)$} Like in most optimizers (SGD, ADAM, RMSprop, etc.), only the parameter $\alpha>0$ has to be carefully tuned to get the most of Algorithm~\ref{alg::Steptuned}. The value $\beta=0.9$ is a common default choice for exponential moving averages (see e.g., \citet{kingma2014adam}). Note that we enforce $\gamma_k \in [\tilde{m}$, $\tilde{M}]$. The bounds stabilize the algorithm and also play an important role for the convergence as we will show in Section~\ref{sec::theory}. While a fine tuning of these bounds may improve the performances, we chose rather tight default values for the sake of numerical stability so that practitioners need not tuning them. The same choice was made for the parameter $\nu$.  Note that we also enforce the step-size to decrease using a decay of the form $1/k^{1/2+\delta}$ where the value of $\delta>0$ is of little importance as long as it is taken close to $0$. This standard procedure goes back to \citet{robbins1951stochastic} and is again necessary to obtain the convergence results presented next.

    \section{Theoretical results\label{sec::theory}}
    We study the convergence of Step-Tuned SGD for smooth non-convex stochastic optimization which encompasses in particular smooth DL problems.
    \subsection{Main result.}
    We recall that $\J$ is a finite sum of twice continuously-differentiable functions $(\J_n)_{n=1,\ldots, N}$. Hence, the gradient of $\J$ and the gradients of each $\J_n$ are locally Lipschitz continuous. A function $g$ is locally Lipschitz continuous on $\R^P$ if for any $\theta\in\R^P$, there exists a neighborhood $\mathrm{V}$ of $\theta$ and a constant $L\in \R_+$ such that for all $\psi_1,\psi_2\in\mathrm{V}$,
    \begin{equation}
        \Vert g(\psi_1) -g(\psi_2) \Vert \leq  L\Vert \psi_1-\psi_2\Vert.
    \end{equation}
    We assume that $\J$ is lower-bounded on $\R^P$, which holds for most DL loss functions by construction (they are usually non-negative).
    We denote by $\Nhalf=\{0,\frac{1}{2},1,\frac{3}{2},2,\ldots\}$ the set of half integers so that the iterations of  Step-Tuned SGD are indexed by $k\in\Nhalf$.
    The main theoretical result of this paper follows. 
    \begin{theorem}\label{thm::mainres}
        Let $\theta_0\in\R^P$, and let $(\theta_k)_{k\in\Nhalf}$ be a sequence generated by Step-Tuned SGD initialized at $\theta_0$. Assume that there exists a constant $C_1>0$ such that almost surely $\sup_{k\in\Nhalf} \Vert \theta_k\Vert<C_1$. Then the sequence of values $(\J(\theta_k))_{k\in\N}$ converges almost surely and $\left(\Vert\nJ(\theta_k)\Vert^2\right)_{k\in\N}$ converges to $0$ almost surely. In addition, for $k\in\N_{>0}$,
        $$\displaystyle\min_{j\in\{0,\ldots,k-1\}}\esp{\Vert\nJ(\theta_j)\Vert^2} = O\left( \frac{1}{k^{1/2-\delta}}\right).$$
    \end{theorem}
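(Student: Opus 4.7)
My plan is to follow the classical Robbins--Siegmund almost-supermartingale template, carefully accounting for two structural features of Step-Tuned SGD: the effective step $\eta_k := \alpha \gk/(k+1)^{1/2+\delta}$ is $\mathcal{F}_{k-1} := \sigma(\mb_0,\ldots,\mb_{k-1})$-measurable (since $\gk$ depends only on $\mb_0,\ldots,\mb_{k-1}$), while $\mb_k$ is independent of $\mathcal{F}_{k-1}$, and $\gk$ is enforced to lie in the compact interval $[\tilde m,\tilde M]$. Because $\delta \in (0,1/2)$, this gives $\sum_k \eta_k = \infty$ and $\sum_k \eta_k^2 < \infty$ almost surely. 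The a.s.\ boundedness of $(\tk)$ by $C_1$ together with $\mathcal{C}^2$ regularity of each $\J_n$ also yields \emph{deterministic} uniform constants $B, L$ such that $\Vert \nJ_{\mb}(\theta)\Vert\le B$ and $\nJ_{\mb}$ is $L$-Lipschitz on the closed ball $\{\Vert\theta\Vert\le C_1\}$, uniformly over all subsets $\mb\subset\{1,\ldots,N\}$.

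Next I would apply the descent lemma twice, along $\tk\to\tkh$ and $\tkh\to\tkp$, and sum to get
$$\J(\tkp)\le \J(\tk) - \eta_k \langle \nJ(\tk),\nJ_{\mb_k}(\tk)\rangle - \eta_k \langle \nJ(\tkh),\nJ_{\mb_k}(\tkh)\rangle + L B^2 \eta_k^2.$$
Conditioned on $\mathcal{F}_{k-1}$, the first inner product has expectation $\Vert\nJ(\tk)\Vert^2$ by unbiasedness of $\nJ_{\mb_k}(\tk)$. The main obstacle is the second term: since the same mini-batch $\mb_k$ is also used to produce $\tkh$ from $\tk$, $\nJ_{\mb_k}(\tkh)$ is \emph{not} unbiased for $\nJ(\tkh)$. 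I would handle this by Lipschitz-expanding both factors around $\tk$,
$$\langle \nJ(\tkh),\nJ_{\mb_k}(\tkh)\rangle = \langle \nJ(\tk),\nJ_{\mb_k}(\tk)\rangle + R_k,\qquad |R_k|\le 2LB\Vert\tkh-\tk\Vert\le 2LB^2\eta_k,$$
so that $\eta_k R_k$ is absorbed into the $O(\eta_k^2)$ remainder. Taking conditional expectation yields the one-step inequality
$$\esp{\J(\tkp)\mid\mathcal{F}_{k-1}}\le \J(\tk) - 2\eta_k \Vert \nJ(\tk)\Vert^2 + K \eta_k^2,$$
for some deterministic constant $K$ depending only on $B$ and $L$.

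Since $\J$ is bounded below and $\sum_k \eta_k^2<\infty$ a.s., the Robbins--Siegmund theorem delivers a.s.\ convergence of $(\J(\tk))_{k\in\N}$ together with $\sum_k \eta_k \Vert \nJ(\tk)\Vert^2 < \infty$ a.s. Combined with $\sum_k \eta_k = \infty$, this already forces $\liminf_k \Vert \nJ(\tk)\Vert = 0$ a.s. The upgrade to $\lim_k \Vert \nJ(\tk)\Vert^2 = 0$ a.s.\ follows from a classical ``no long excursion'' argument: on any sample path where $\limsup_k \Vert \nJ(\tk)\Vert^2 > 2\varepsilon > 0$, there exist infinitely many index windows on which $\Vert \nJ(\tk)\Vert^2$ climbs from below $\varepsilon$ to above $2\varepsilon$; Lipschitz continuity of $\Vert \nJ(\cdot)\Vert^2$ on the bounded set together with $\Vert \tkp-\tk\Vert \le 2B\eta_k\to 0$ forces each such window to contribute a uniformly positive amount to $\sum_k \eta_k \Vert \nJ(\tk)\Vert^2$, contradicting its finiteness.

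For the rate, take total expectation in the one-step inequality and telescope from $j=0$ to $k-1$, using $\esp{\J(\theta_k)}\ge \inf \J$, $\eta_j\ge \alpha\tilde m (j+1)^{-1/2-\delta}$, and $\sum_{j\ge 0}(j+1)^{-1-2\delta}<\infty$:
$$2\alpha\tilde m \sum_{j=0}^{k-1}(j+1)^{-1/2-\delta}\,\esp{\Vert \nJ(\theta_j)\Vert^2}\le \J(\theta_0) - \inf \J + K\alpha^2\tilde M^2 \sum_{j\ge 0}(j+1)^{-1-2\delta}.$$
Bounding the left-hand side below by $\min_{j<k}\esp{\Vert \nJ(\theta_j)\Vert^2}\cdot \sum_{j=0}^{k-1}(j+1)^{-1/2-\delta} = \Theta(k^{1/2-\delta})$ then yields the announced $O(k^{-(1/2-\delta)})$ rate. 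The true technical crux throughout remains the biased second half-step treated in the second paragraph; once it is controlled by the Lipschitz expansion, the remainder of the analysis is the standard Robbins--Monro blueprint.
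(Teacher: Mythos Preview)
Your proposal is correct and follows essentially the same approach as the paper: the same filtration setup, double application of the descent lemma, the same Lipschitz expansion to control the biased second half-step (the paper's bound is exactly your $2LB^2\eta_k$), the same Robbins--Siegmund application, and the same telescoping argument for the rate. The only cosmetic difference is in the upgrade from $\liminf$ to $\lim$: the paper packages this via a lemma of \citet{alber1998projected} (if $\sum u_kv_k<\infty$, $\sum v_k=\infty$, and $|u_{k+1}-u_k|\le Cv_k$ then $u_k\to 0$), whereas you sketch the equivalent ``no long excursion'' argument directly---both rely on the same estimate $\bigl|\Vert\nJ(\theta_{k+1})\Vert^2-\Vert\nJ(\theta_k)\Vert^2\bigr|\le C\eta_k$.
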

    
        The results above state in particular that a realization of the algorithm reaches a point where the gradient is arbitrarily small with probability one. Note that the rate depends on the parameter $\delta\in(0,1/2)$ which can be chosen by the user and corresponds to the decay schedule $1/(k+1)^{1/2+\delta}$. In most cases, one will want to slowly decay the step-size so $\delta\simeq 0$ and the rate is close to $1/\sqrt{k+1}$.
        
        \subsection{An alternative to the boundedness assumption.}
        In Theorem~\ref{thm::mainres} the assumption that almost surely the iterates $(\theta_k)_{k\in\Nhalf}$ are uniformly bounded is made. While this is usual for non-convex problems tackled with stochastic algorithms \citep{davis2018stochastic,duchi2018stochastic,castera2019inertial} it may be hard to check in practice. This assumption is however consistent with numerical experiments since practitioners usually choose the hyper-parameters (in particular the step-size) so that the weights of the DNN remain ``not too large'' for the sake of numerical stability.
        
        One can alternatively replace the boundedness assumption by leveraging additional regularity assumptions on the loss function as \citet{li2019convergence} did for example for the scalar variant of ADAGRAD. This is more restrictive than the locally-Lipschitz-continuous property of the gradient that we used but for completeness we provide below an alternative version of Theorem~\ref{thm::mainres} where we assume that for each $n\in\{1,\ldots,N\}$, the function $\J_n$ and its gradient $\nabla\J_n$ are Lipschitz continuous.

\begin{corollary}\label{cor::globLipschitz}
    Let $\theta_0\in\R^P$, and let $(\theta_k)_{k\in\Nhalf}$ be a sequence generated by Step-Tuned SGD initialized at $\theta_0$. Assume that each $\J_n$ and $\nJ_n$ are Lipschitz continuous on $\R^P$ and that each $\J_n$ is bounded below, for all $n\in\{1,\ldots, N\}$. Then the same conclusions as in Theorem~\ref{thm::mainres} apply.
\end{corollary}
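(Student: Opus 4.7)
The plan is to show that the almost-sure boundedness assumption in Theorem~\ref{thm::mainres} is used in its proof only as a convenient way of producing (i) a uniform bound on $\|\nabla\J_n(\theta_k)\|$ and $\|\nabla\J(\theta_k)\|$ along the trajectory, and (ii) a uniform Lipschitz constant for $\nabla \J$ (and each $\nabla \J_n$) on a region containing the iterates. Under the hypotheses of the corollary, both are available globally, so the proof of Theorem~\ref{thm::mainres} transfers verbatim after replacing the trajectory-dependent constants by global ones. More precisely, Lipschitz continuity of $\J_n$ with constant $L_n$ yields $\|\nabla \J_n(\theta)\|\leq L_n$ for every $\theta\in\R^P$, hence with $L:=\max_n L_n$ one has $\|\nabla\J_{\mb}(\theta)\|\leq L$ uniformly in $\theta$ and $\mb$. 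Lipschitz continuity of each $\nabla\J_n$ with constant $\tilde L_n$ gives a global gradient-Lipschitz constant $\tilde L:=\max_n \tilde L_n$ for $\J$ and for every $\J_{\mb}$. Finally, each $\J_n$ bounded below forces $\J$ to be bounded below.

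Having these ingredients, I would run the proof of Theorem~\ref{thm::mainres} unchanged. The clipping $\gamma_k\in[\tilde m,\tilde M]$ combined with the uniform gradient bound $L$ controls $\|\dtkh\|$, $\|\dgkh\|$, $\|G_k\|$ and $\|\hat G_k\|$ uniformly in $k$, so every quantity that the original proof bounds using $C_1$ is now bounded by an expression in $L,\tilde L,\tilde m,\tilde M,\alpha,\nu$. The descent inequality
\[
\J(\theta_{k+1/2})\leq \J(\theta_k)+\langle\nabla\J(\theta_k),\theta_{k+1/2}-\theta_k\rangle+\tfrac{\tilde L}{2}\|\theta_{k+1/2}-\theta_k\|^2,
\]
together with its analogue between $\theta_{k+1/2}$ and $\theta_{k+1}$, now holds on all of $\R^P$ rather than on a random bounded region, which is exactly what the conditional expectation step of Theorem~\ref{thm::mainres} needs in order to produce a supermartingale-type recursion for $\esp{\J(\theta_{k+1})}$ whose remainder is summable thanks to the decay $1/(k+1)^{1+2\delta}$. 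Applying Robbins--Siegmund gives almost sure convergence of $(\J(\theta_k))_{k\in\N}$; summability of the squared-gradient terms, combined with $\J$ bounded below, yields $\|\nabla\J(\theta_k)\|^2\to 0$ almost surely and the rate $\min_{j<k}\esp{\|\nabla\J(\theta_j)\|^2}=O(k^{-(1/2-\delta)})$.

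The main potential obstacle is to verify that nowhere in the original argument is boundedness of iterates invoked in a subtler way than in (i) and (ii) above, for instance to justify a dominated convergence step or to bound a Taylor remainder through a local-Lipschitz modulus that is not a priori globalizable. I expect any such occurrence to be resolved by the very same global constants $L$ and $\tilde L$: dominated convergence is underpinned by the uniform bound $\|\nabla\J_{\mb_k}(\theta)\|\leq L$, and Taylor remainders for $\J$ and $\J_{\mb_k}$ are everywhere controlled by $\tilde L$. Hence no new analytic ingredient is needed, and the conclusions of Theorem~\ref{thm::mainres} hold under the assumptions of the corollary.
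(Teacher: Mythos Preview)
Your proposal is correct and follows essentially the same approach as the paper: replace the compact-set-dependent constants from Theorem~\ref{thm::mainres} by global ones obtained from the Lipschitz assumptions (Lipschitz $\J_n$ gives uniformly bounded $\nabla\J_{\mb}$, Lipschitz $\nabla\J_n$ gives a global descent lemma), and then rerun the argument verbatim. The paper's own proof is a terse two-sentence version of exactly this.
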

Note that the assumption that each $\J_n$ is Lipschitz continuous is similar to assumptions (H4) and (H4') from \citet[Section~3]{li2019convergence}, yet a little less general. This is due to the fact that using each mini-batch twice brings additional difficulties when studying the convergence (see the proof of Theorem~\ref{thm::mainres}).

\subsection{Proof sketch of Theorem~\ref{thm::mainres}.} The proof of our main theorem is fully detailed in Section~\ref{sec::supTheor} of the Supplementary. Here we present the key elements of this proof. 
\begin{itemize}
    \item The proof relies on the descent lemma: for any compact subset  $\mathsf{C}\subset\R^P$ there exists $L>0$ such that for any $\theta\in\mathsf{C}$ and  $d\in \R^P$ such that $\theta+d\in\mathsf{C}$,
    \begin{equation}\label{eq::genDescentSKETCH}
        \J(\theta+d) \leq \J(\theta) + \langle \nJ(\theta), d\rangle + \frac{L}{2}\Vert d \Vert^2.
    \end{equation}
    \item Let $(\theta_k)_{k\in\Nhalf}$ be a realization of the algorithm. Using the boundedness assumption, almost surely the iterates belong to a compact subset $\mathsf{C}$ on which $\nJ$ and the gradients estimates $\nJ_{\mb_{k}}$ are uniformly bounded. So at any iteration $k\in\N$, we may use the descent lemma \eqref{eq::genDescentSKETCH} on the update direction $d=-\gamma_k\nJ_{\mb_k}(\theta_k)$ to bound the difference $\J(\theta_{k+1})-\J(\theta_k)$.
    \item As stated in Section~\ref{sec::mini-batch}, conditioning on $\mb_0,\ldots,\mb_{k-1}$ the step-size $\gamma_k$ is constructed to be independent of the current mini-batch $\mb_k$. Using this and the descent lemma, we show that there exist $M_1, M_2>0$ such that, for all $k \in \N_{>0}$,
    \begin{equation}\label{eq::esptotSKETCH}
        \esp{\J(\theta_{k+1})\mid \mb_0,\ldots\mb_{k-1}} \leq\J(\theta_k)  - \frac{M_1}{(k+1)^{1/2+\delta}}\Vert \nJ(\theta_k)\Vert^2 + \frac{M_2}{(k+1)^{1+2\delta}},
    \end{equation}
    where $\esp{\J(\theta_{k+1})\mid \mb_0,\ldots\mb_{k-1}}$ denotes the conditional expectation of the random value $\J(\theta_{k+1})$ conditionally on the mini-batches $\mb_0,\ldots\mb_{k-1}$. 
    \item Applying Robbins-Siegmund  convergence theorem \cite{robbins1971convergence} for martingales to \eqref{eq::esptotSKETCH}, using the fact that $\sum_{k=0}^{+\infty}\frac{1}{(k+1)^{1+2\delta}}<\infty$, we obtain almost surely that the sequence $(\J(\tk))_{k\in\N}$ converges and
    \begin{equation}\label{eq::cvsumSKETCH}
        \sum_{k=0}^{+\infty}\frac{1}{(k+1)^{1/2+\delta}} \Vert \nJ(\theta_k) \Vert^2<+\infty,
    \end{equation}
    Since $\sum_{k=0}^{+\infty}\frac{1}{(k+1)^{1/2+\delta}}=+\infty$, we deduce that $ \nJ(\theta_k)$ converges to zero almost surely, using the local Lipschitz continuity of the gradient (from twice differentiability) and an argument of \citet{alber1998projected}. The rate follows from considering expectations on both sides of \eqref{eq::esptotSKETCH}.
\end{itemize}

    \section{Application to Deep Learning\label{sec::numDL}}
    We finally evaluate the performance of Step-Tuned SGD by training DNNs. We consider six different problems presented next and fully-specified in Section~\ref{sec:expe} of the Supplementary. The results with Problems (a) to (d) are first presented here while the results with Problems (e) and (f) are discussed at the end of this section. We compare Step-Tuned SGD with two of the most popular non-momentum methods, SGD and RMSprop~\cite{tieleman2012lecture}, and we also consider the momentum method ADAM~\citep{kingma2014adam} which is a very popular DL optimizer. 
    Our method is detailed below.
    \subsection{Setting of the experiments}\label{sec::settingDL}
    \begin{itemize}
        \item We consider image classification problems with color images CIFAR-10 and CIFAR-100 \citep{krizhevsky2009learning} and the training of an auto-encoder on MNIST \citep{lecun2010mnist}.
        \item The networks are slightly modified versions of Lenet \citep{lecun1998gradient}, ResNet-20 \citep{he2015}, Network-in-Network (NiN) \citep{lin2013network} and the auto-encoder of \citet{hinton2006reducing}.
        \item As specified in Table~\ref{tab::expsummary} of the Supplementary, we used either smooth (ELU, SiLU) or nonsmooth (ReLU) activations. Though our theoretical analysis only applies to smooth activations, we did not in practice observe a significant qualitative difference between  ReLU or its smooth versions.
        
        \item For image classification tasks, the dissimilarity measure is the cross-entropy, and for the auto-encoder, it is the mean-squared error. In each problem we also add a $\ell^2$-regularization parameter (a.k.a. weight decay) of the form $\frac{\lambda}{2}\Vert\theta\Vert_{2}^2$.
        \item For each algorithm, we selected the learning rate parameter $\alpha$ from the set $\{10^{-4},\ldots,10^0\}$. The value is selected as the one yielding minimum training loss after $10\%$ of the total number of epochs. For example, if we intend to train the network during $100$ epochs, the grid-search is carried on the first $10$ epochs. For Step-Tuned SGD, the parameter $\nu$ was selected with the same criterion from the set $\{1,2,5\}$ and for ADAM the momentum parameter was chosen in $\{0.1,0.5,0.9,0.99\}$. All other parameters of the algorithms are left to their default values.
        \item  Decay-schedule: To meet the conditions of Theorem~\ref{thm::mainres} the step-size decay schedule of SGD and Step-Tuned SGD takes the form $1/q^{1/2+\delta}$ where $q$ is the current epoch index and $\delta=0.001$. It slightly differs from what is given in Algorithm~\ref{alg::Steptuned} as we apply the decay at each epoch instead of each iteration. This slower schedule still satisfies the conditions for the convergence of Theorem~\ref{thm::mainres}. \footnote{An alternative common practice consists in manually decaying the step-size at pre-defined epochs. This technique although efficient in practice to achieve state-of-the-art results makes the comparison of algorithms harder, hence we stick to a usual Robbins-Monro type of decay.}
        RMSprop and ADAM rely on their own adaptive procedure and are usually used without step-size decay schedule.
        \item The experiments were run on a Nvidia GTX 1080 TI GPU, with an Intel Xeon 2640 V4 CPU. The code was written in \textit{Python} 3.6.9 and \textit{PyTorch} 1.4 \citep{paszke2019pytorch}.
    \end{itemize}
    
    \paragraph{Second experiment: mini-batch sub-sampling.}
    Step-Tuned SGD departs from the standard process of drawing a new mini-batch after each gradient update. Indeed, we use each mini-batch twice in order to properly approximate the curvature of the sliding loss, but also to maintain a computing time similar to standard algorithms.
    We performed additional experiments to understand the consequences of using the same mini-batch twice, and in particular make sure that this is not the source of the observed advantage of Step-Tuned SGD. In these experiments all methods are used with the mini-batch drawing procedure of Step-Tuned SGD detailed in Algorithm~\ref{alg::Steptuned} (each mini-batch being used to perform two consecutive gradient steps).
    
    \subsection{Results}
    \begin{figure}[t]
        \begin{center}
            \begin{minipage}[c]{0.25\textwidth}
                \centering
                {\centering \scriptsize Problem (a): training error}
                \includegraphics[width=\linewidth]{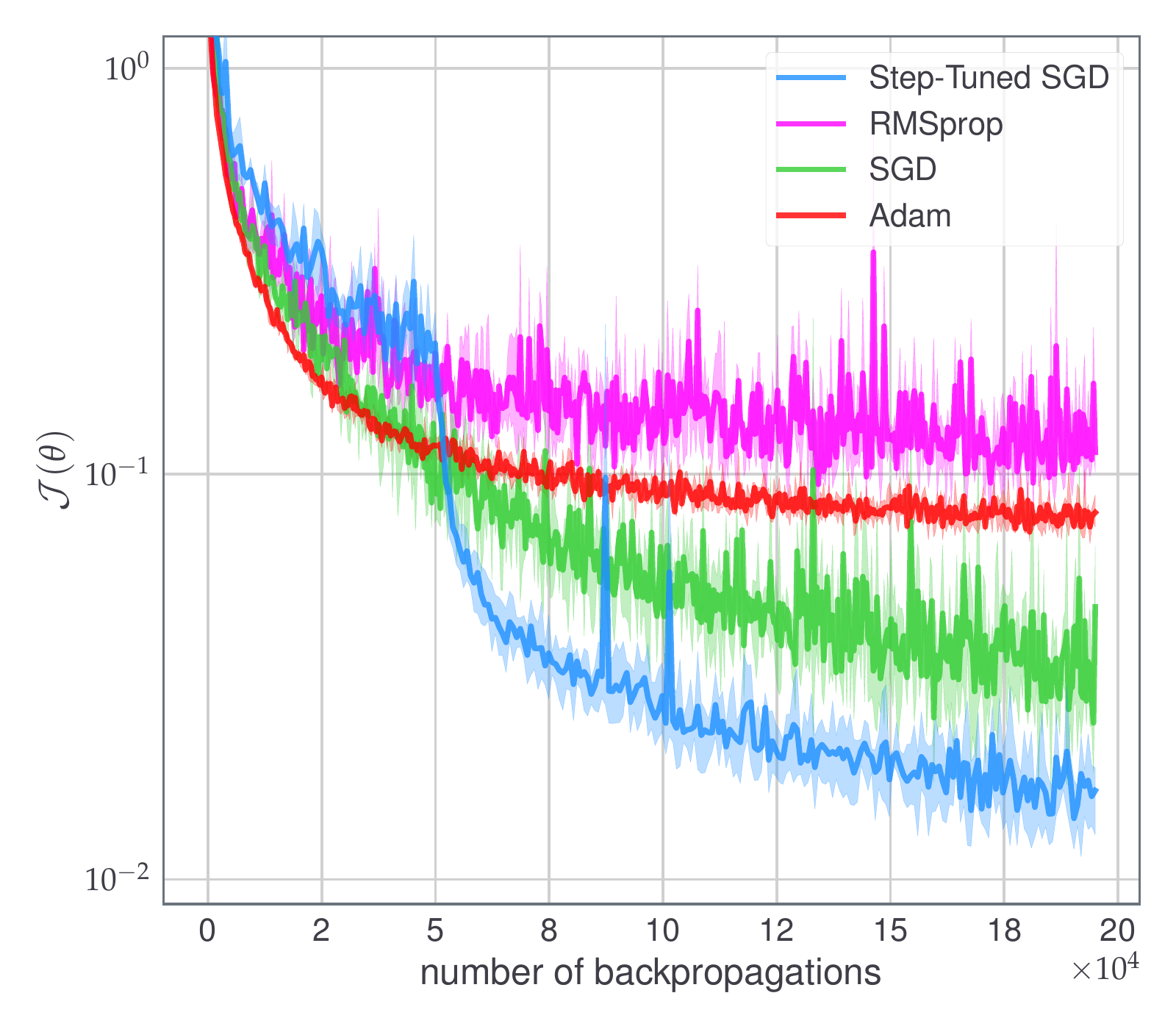}
            \end{minipage}%
            \begin{minipage}[c]{0.25\textwidth}
            \centering
            {\centering \scriptsize Problem (b): training error}
            \includegraphics[ width=\linewidth]{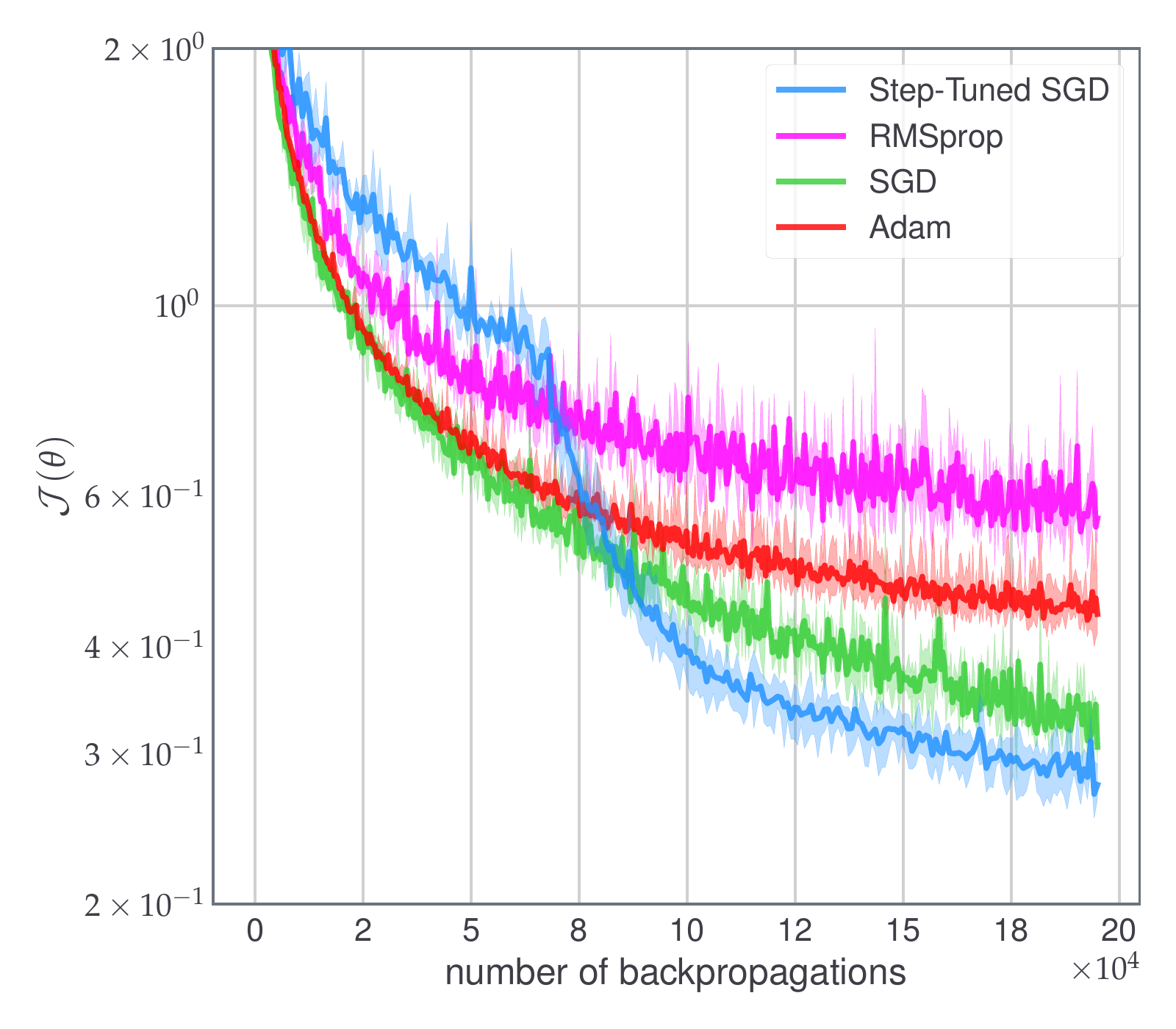}
            \end{minipage}%
            \begin{minipage}[c]{0.25\textwidth}
                \centering
                {\centering \scriptsize Problem (c): training error}
                \includegraphics[ width=\linewidth]{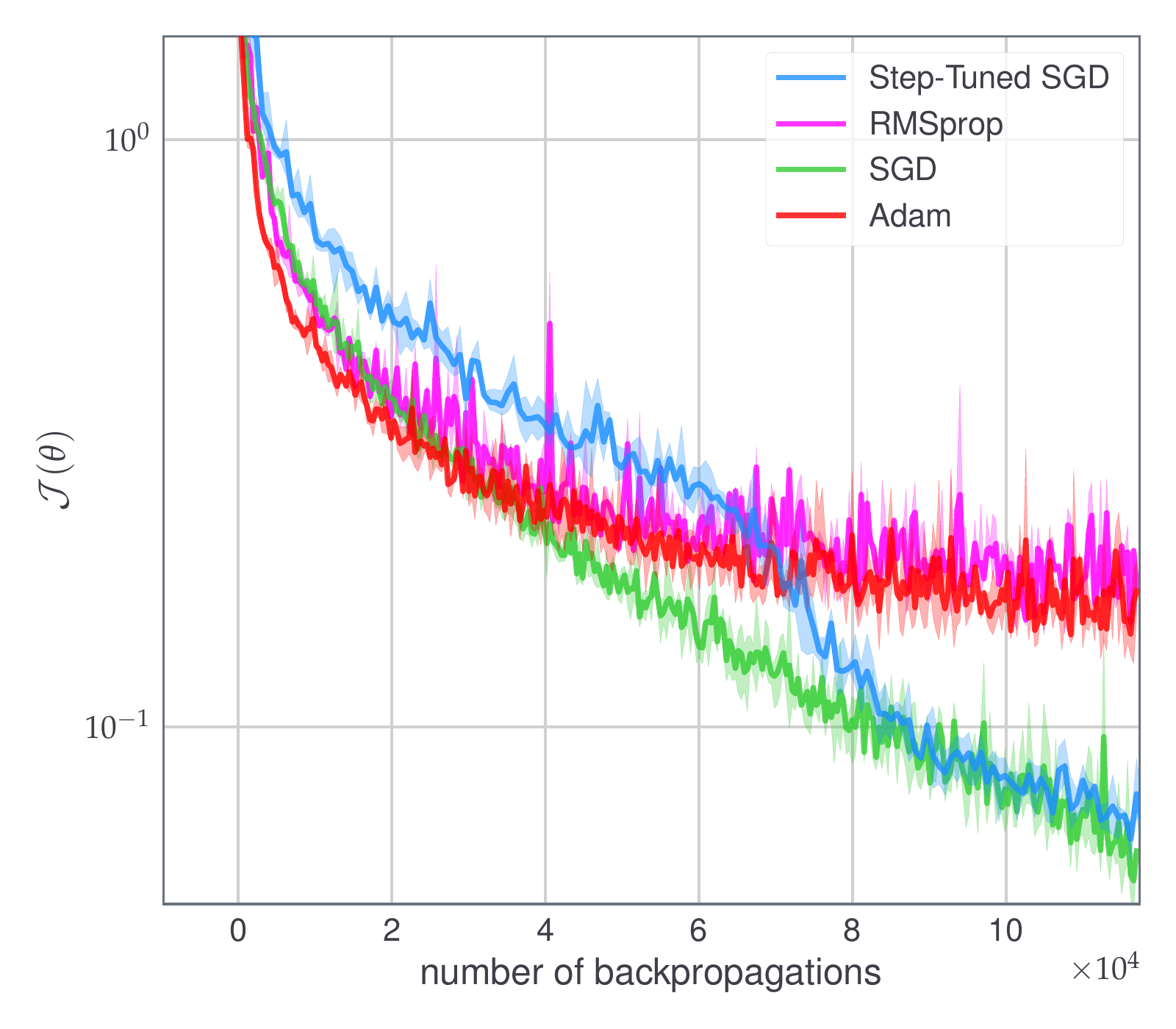}
            \end{minipage}%
            \begin{minipage}[c]{0.25\textwidth}
            \centering
            {\centering \scriptsize Problem (d): training error}
            \includegraphics[width=\linewidth]{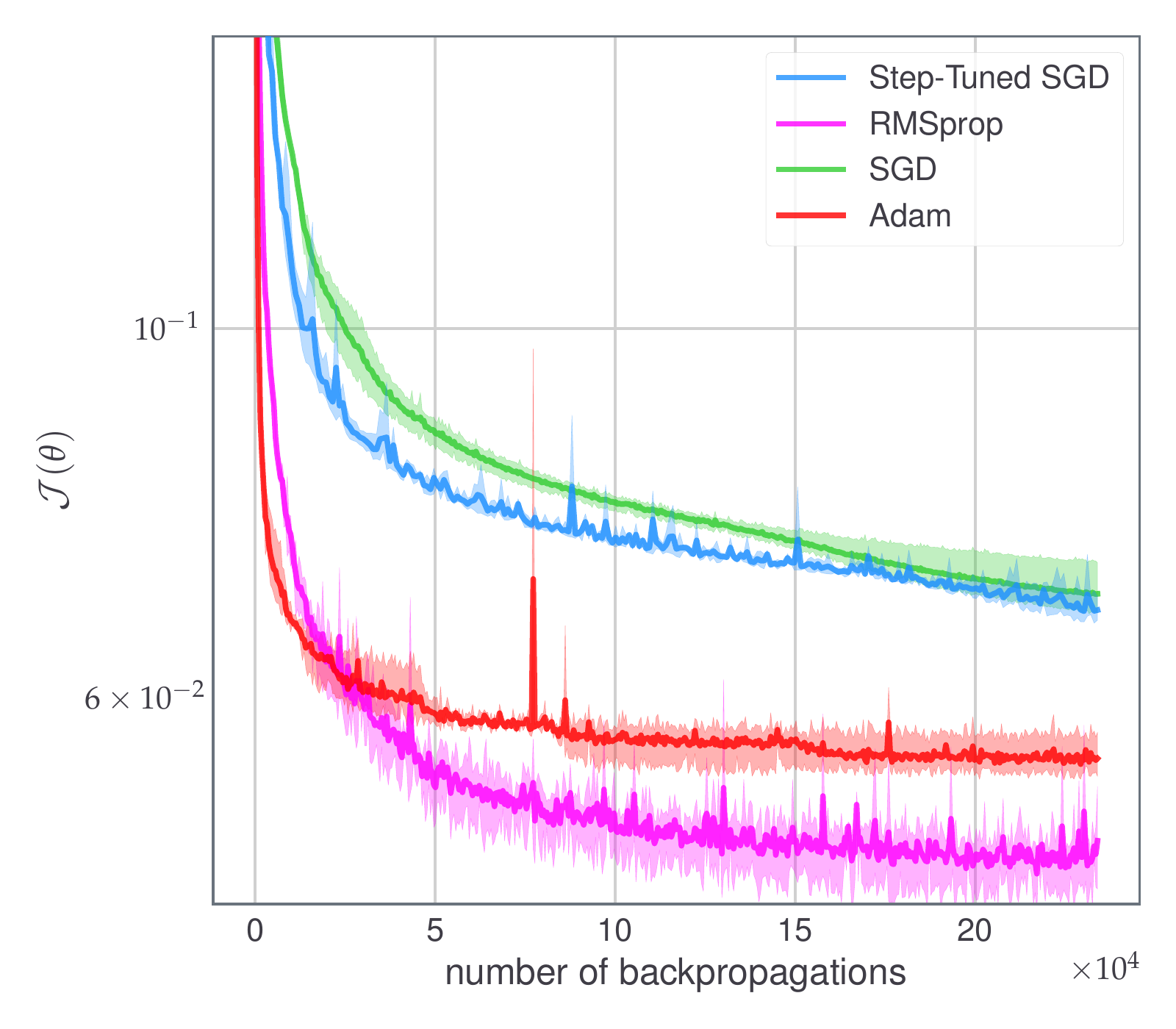}
            \end{minipage}%
            \\
            \begin{minipage}[c]{0.25\textwidth}
                \centering
                {\centering \scriptsize Problem (a): test accuracy}
                \includegraphics[ width=\textwidth]{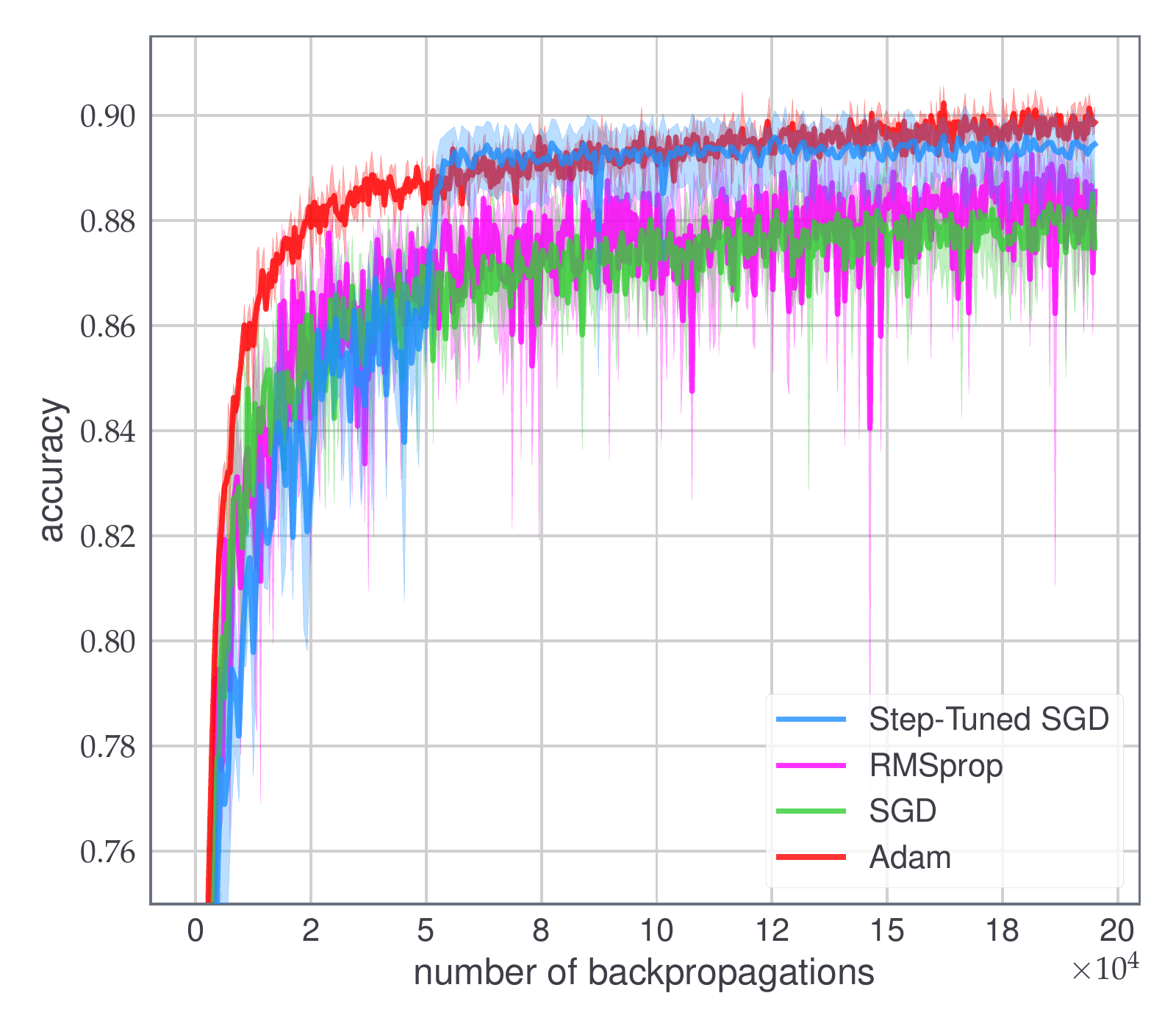}
            \end{minipage}%
            \begin{minipage}[c]{0.25\textwidth}
            \centering
            {\centering \scriptsize Problem (b): test accuracy}
            \includegraphics[ width=\textwidth]{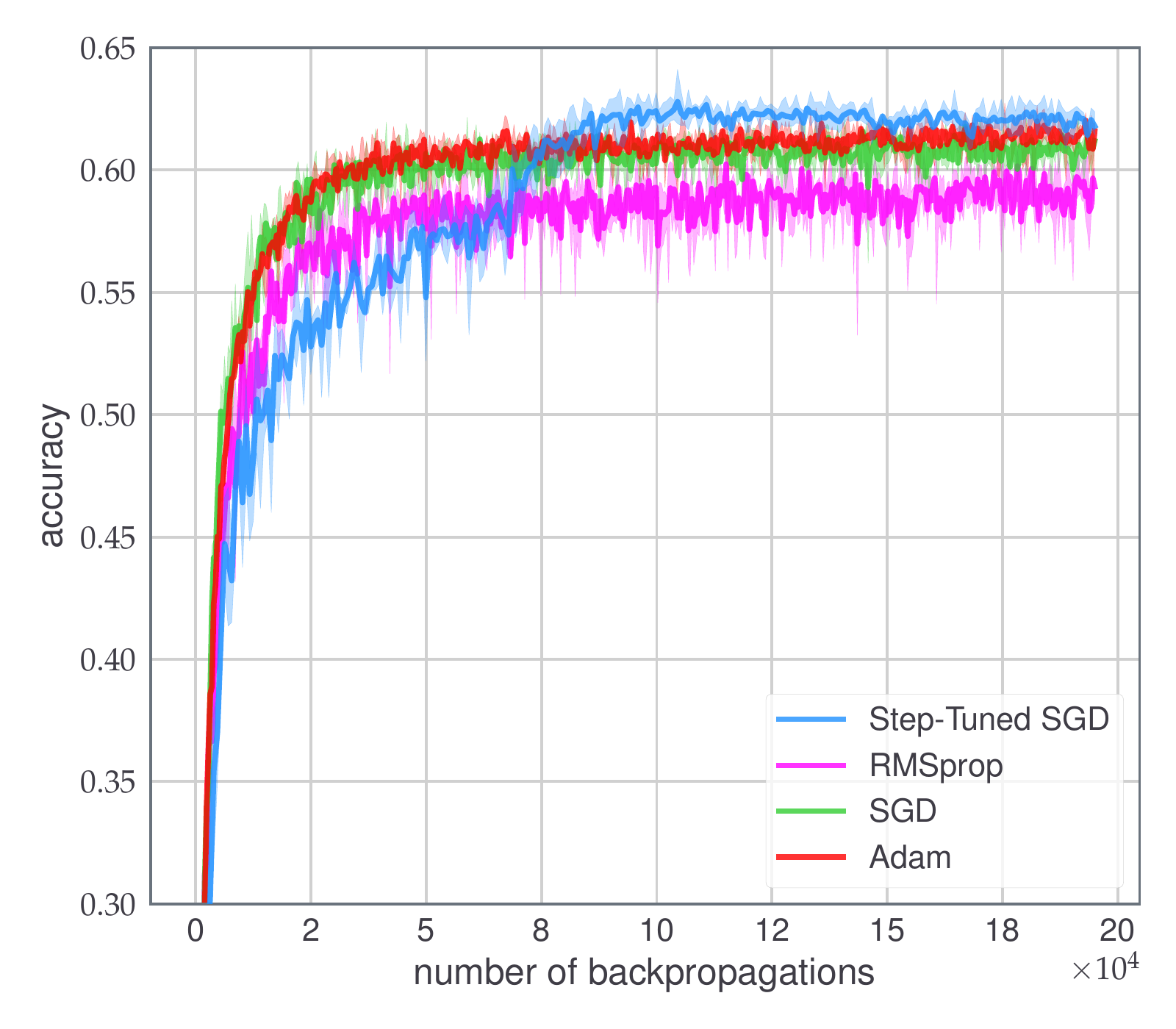}
            \end{minipage}%
            \begin{minipage}[c]{0.25\textwidth}
                \centering
                {\centering \scriptsize Problem (c): test accuracy}
                \includegraphics[ width=\linewidth]{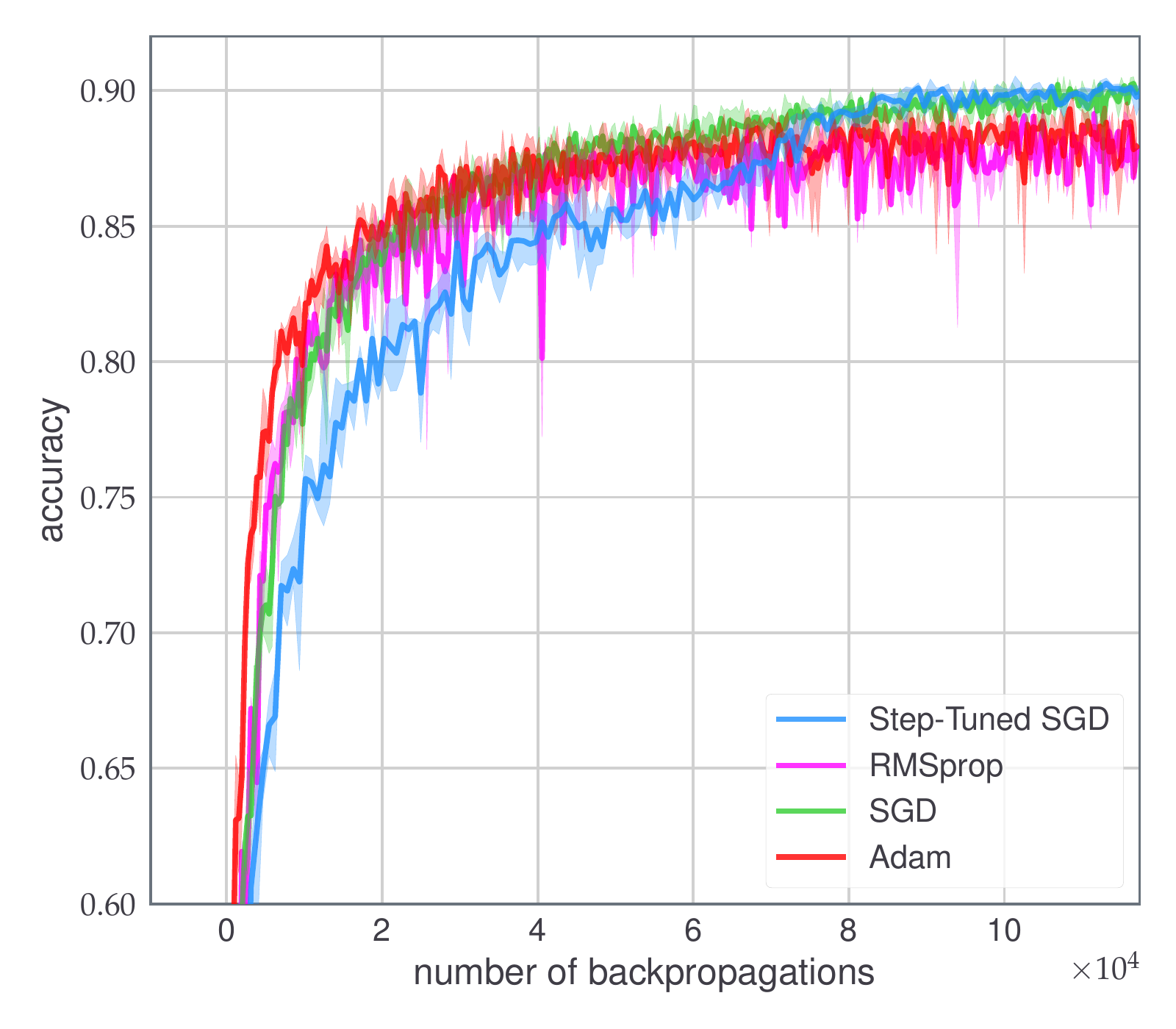}
            \end{minipage}%
            \begin{minipage}[c]{0.25\textwidth}
            \centering
            {\centering \scriptsize Problem (d): test error}
            \includegraphics[ width=\linewidth]{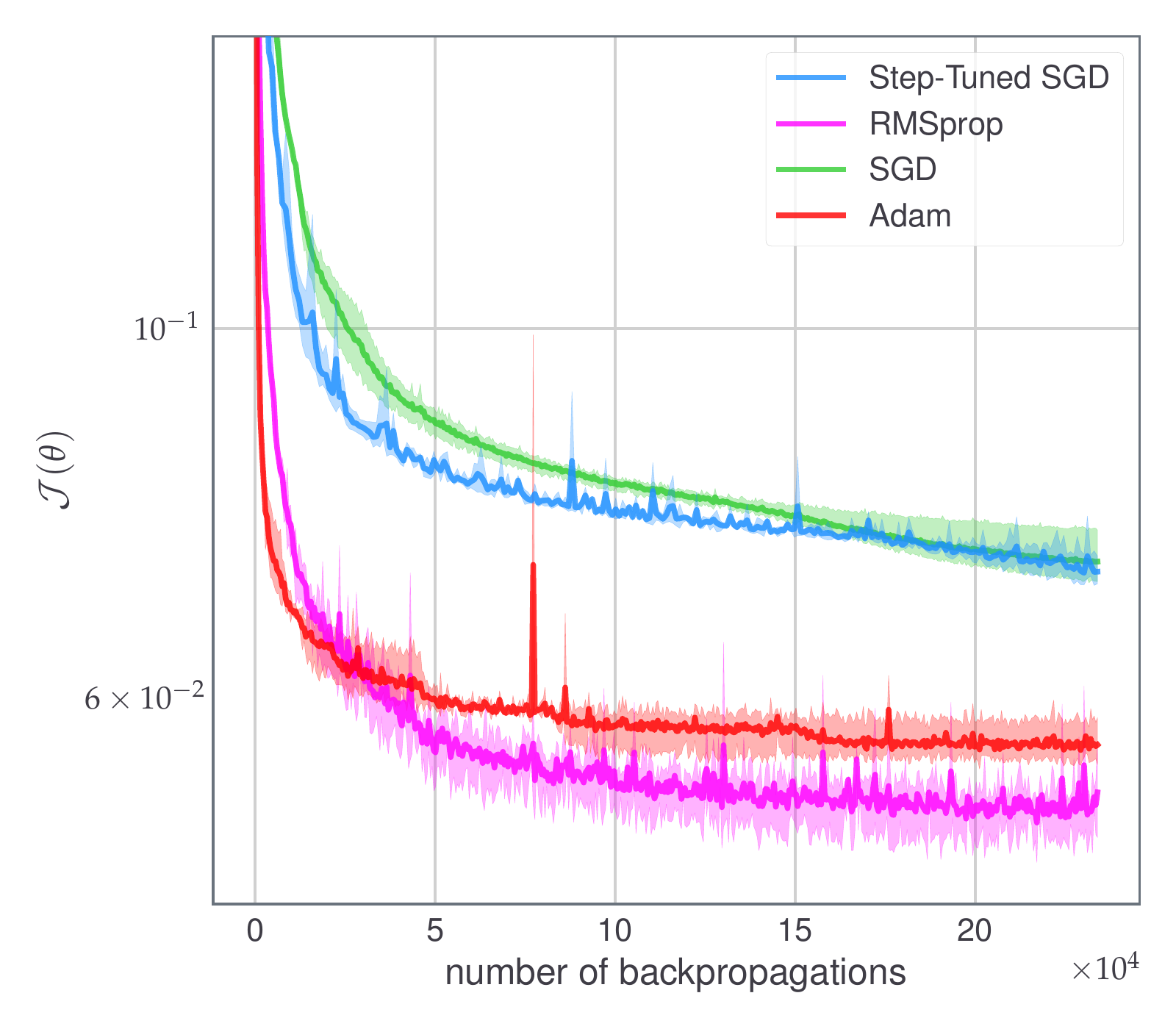}
            \end{minipage}%
            \caption{Classification of  CIFAR-10 and CIFAR-100  with ResNet-20 (left and middle-left respectively), CIFAR-10 with NiN (middle-right) and training of an auto-encoder on MNIST (right). This corresponds to Problems (a) to (d) specified in Table~\ref{tab::expsummary}. 
            Continuous lines: average values from 3 random initializations. Limits of shadow area: best and worst runs (in training loss). For fair comparison values are plotted against the number of gradient estimates computed (using back-propagation).
            \label{fig::probABCD}}
        \end{center}
    \end{figure}

    We describe the results for the two types of experiments, the comparative one to assess the quality of Step-Tuned SGD against concurrent optimization algorithms, and the other one to study the effect of changing the way mini-batches are used.
    \subsubsection{Comparison with standard methods.}
    The results for problems (a) to (d) are displayed on Figure~\ref{fig::probABCD}.
    For each problem we display the evolution of the values of the loss function and of the test accuracy during the training phase. We observe a recurrent behavior: during early training  Step-Tuned SGD behaves similarly than other methods, then there is a sudden drop of the loss (combined with an improvement in terms of test accuracy which we discuss below). As a result, Step-Tuned SGD achieves the best training performance among all algorithms on problems (a) and (b) and at least outperforms SGD in five of the six problems considered (result for Problems (e) and (f) are on Figure~\ref{fig::probEF}). The sudden drop observed is in accordance  with our preliminary observations in Figure \ref{fig::CD_MB}. We note that a similar drop and improved results are reported for SGD and ADAM when used with a manually enforced reduction of the learning rate see, e.g. \citet{he2015}. Our experiments show however that Step-Tuned SGD behaves similarly but in an automatic way: the \textit{drop down} is caused by the automatic fine-tuning the algorithm is designed to achieve and not by user-defined reduction of the step-size.
    
    We remark that in problem (d) ADAM and RMSprop are notably better than SGD and Step-Tuned SGD. This may be explained by their vector step-sizes (a scalar step-size for each coordinate of $\theta$) as auto-encoders are often ill-conditioned, making methods with scalar step-sizes less efficient. To conclude on these comparative experiments, in most cases Step-Tuned SGD represents a significant improvement compared to SGD. It also seems to be a good alternative to adaptive methods like RMSprop or ADAM especially on residual networks.  Note also that while stochastic second-order methods usually perform well mostly when combined with large mini-batches (hence with less-noisy gradients), we obtain satisfactory performances with mini-batches of standard sizes.
    
    In addition to efficient training performances (in terms of loss function values), Step-Tuned SGD generalizes well (as measured by test accuracy). For example Figure~\ref{fig::probABCD} shows a correlation between test accuracy and training loss. Conditions or explanations for when this happens are not fully understood to this day. Yet, SGD is often said to behave well with respect to this matter \citep{wilson2017marginal} and hence it is satisfactory to observe that Step-Tuned SGD seems to inherit this property.

    \begin{figure}[t]
        \begin{center}
            \begin{minipage}[c]{0.25\textwidth}
                \centering
                {\centering \scriptsize Problem (a): training error}
                \includegraphics[width=\linewidth]{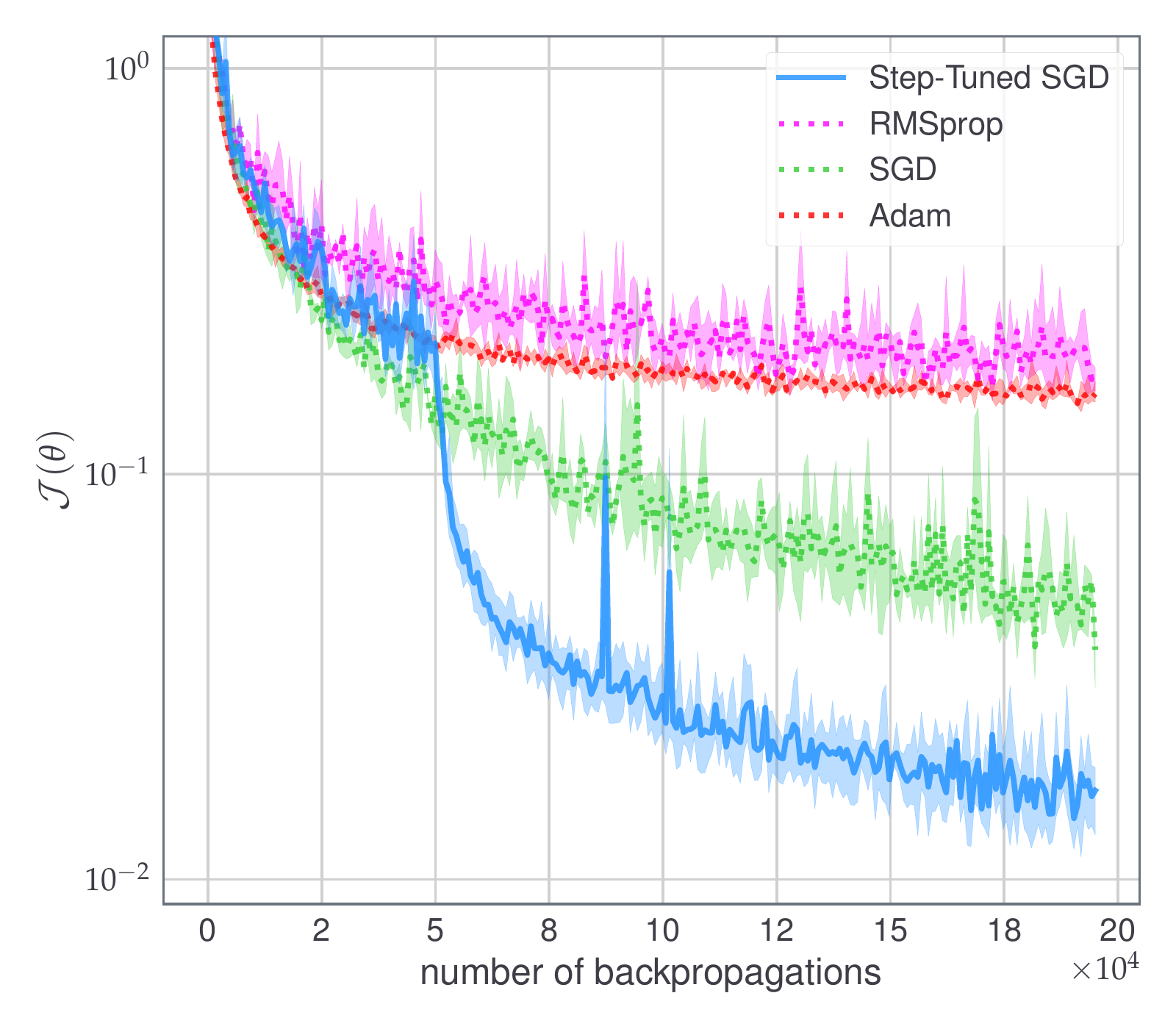}
            \end{minipage}%
            \begin{minipage}[c]{0.25\textwidth}
                \centering
                {\centering \scriptsize Problem (b): training error}
                \includegraphics[ width=\linewidth]{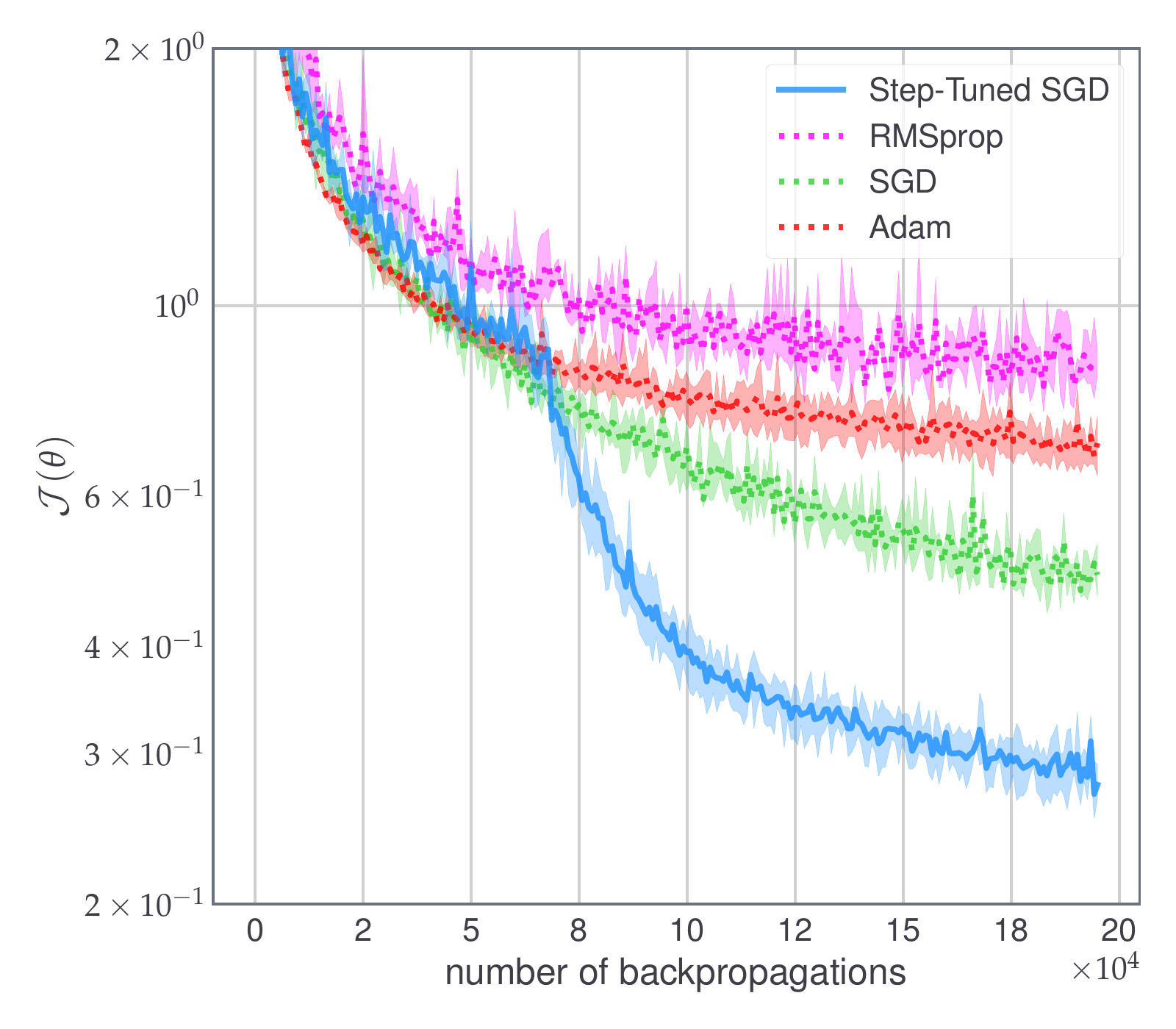}
            \end{minipage}%
            \begin{minipage}[c]{0.25\textwidth}
                \centering
                {\centering \scriptsize Problem (c): training error}
                \includegraphics[ width=\linewidth]{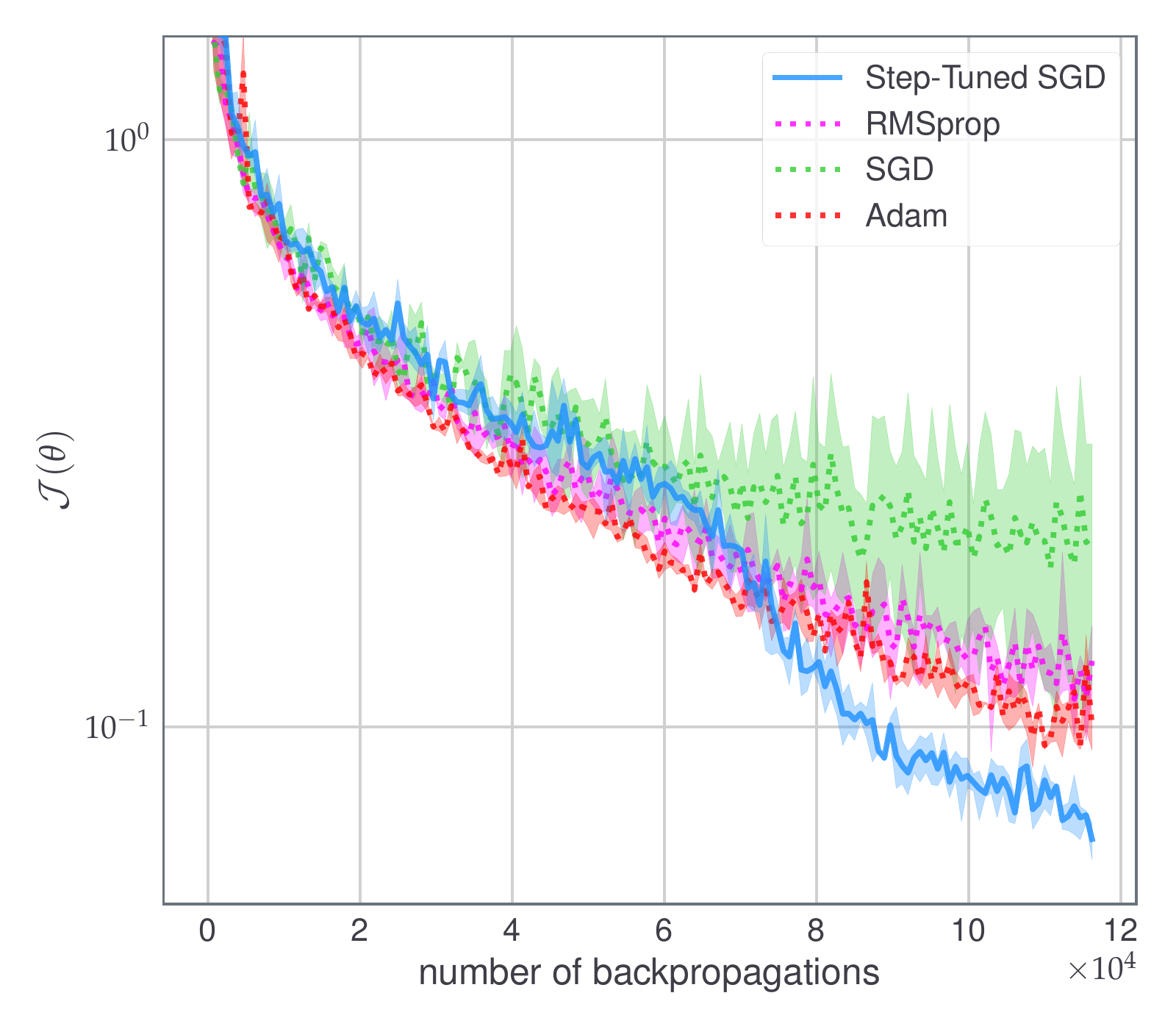}
            \end{minipage}%
            \begin{minipage}[c]{0.25\textwidth}
                \centering
                {\centering \scriptsize Problem (d): training error}
                \includegraphics[ width=\linewidth]{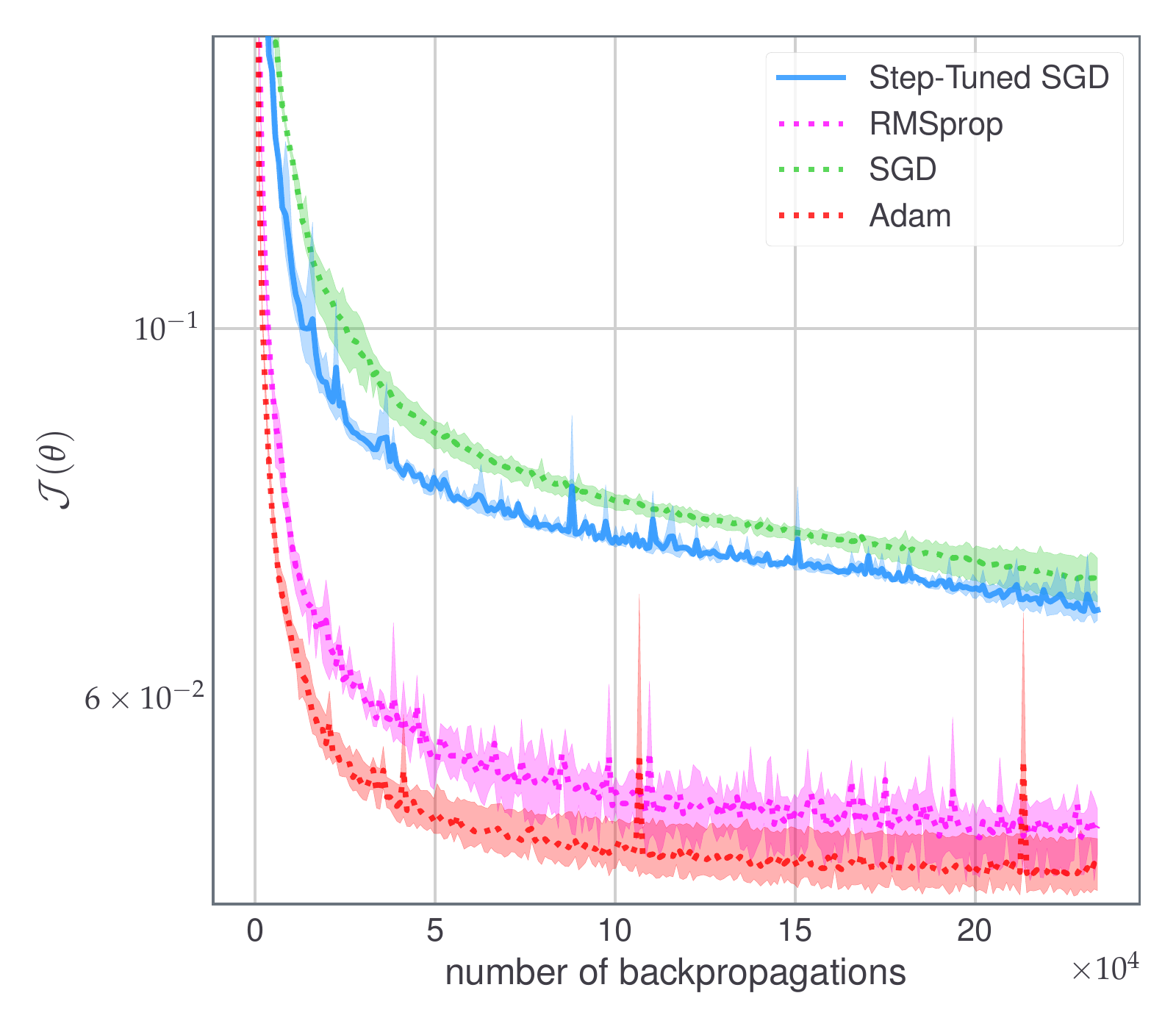}
            \end{minipage}%
            \caption{Experiment where each algorithms receives the same mini-batch for two consecutive iterations as in Algorithm~\ref{alg::Steptuned}. This allows to compare algorithms with respect to the number of data processed. The problems and the framework are the same as in Figure~\ref{fig::probABCD}.
            \label{fig::probABC2grads}}
        \end{center}
    \end{figure}
    \subsubsection{Effect of the mini-batch sub-sampling of Step-Tuned SGD.}
     The results are presented on Figure~\ref{fig::probABC2grads}. We observe that using each mini-batch twice usually reduces the performance of SGD, ADAM and RMSprop, except on problem (c) where it benefits the latter two in term of training error. Thus, on these problems, changing the way of using mini-batches is not the reason for the success of our method. On the contrary, it seems that our goal which was to obtain a fine-tuned step-size specifically for each iteration is clearly achieved, but processing data more slowly, like Step-Tuned SGD does, can sometimes impact the performances of the algorithm. 
     
    Arguably these results show that the need for using each mini-batch twice appears to be the main downside of Step-Tuned SGD. Thus in problems where mini-batches may be very different we should expect other methods to be more efficient as they process data twice faster. We actually remark that our method achieves its best results on networks where batch normalization (BatchNorm) is used, a technique that aims to normalize the inputs of neural networks \citep{ioffe2015batch}. Figure~\ref{fig::probEF} corroborates these observations: BatchNorm has a positive effect on Step-Tuned SGD.
    
     \begin{figure}[t]
        \begin{center}
            \begin{minipage}[c]{0.3\textwidth}
                \centering
                {\centering \scriptsize Without BatchNorm (Problem (e)): training error}
                \includegraphics[width=\linewidth]{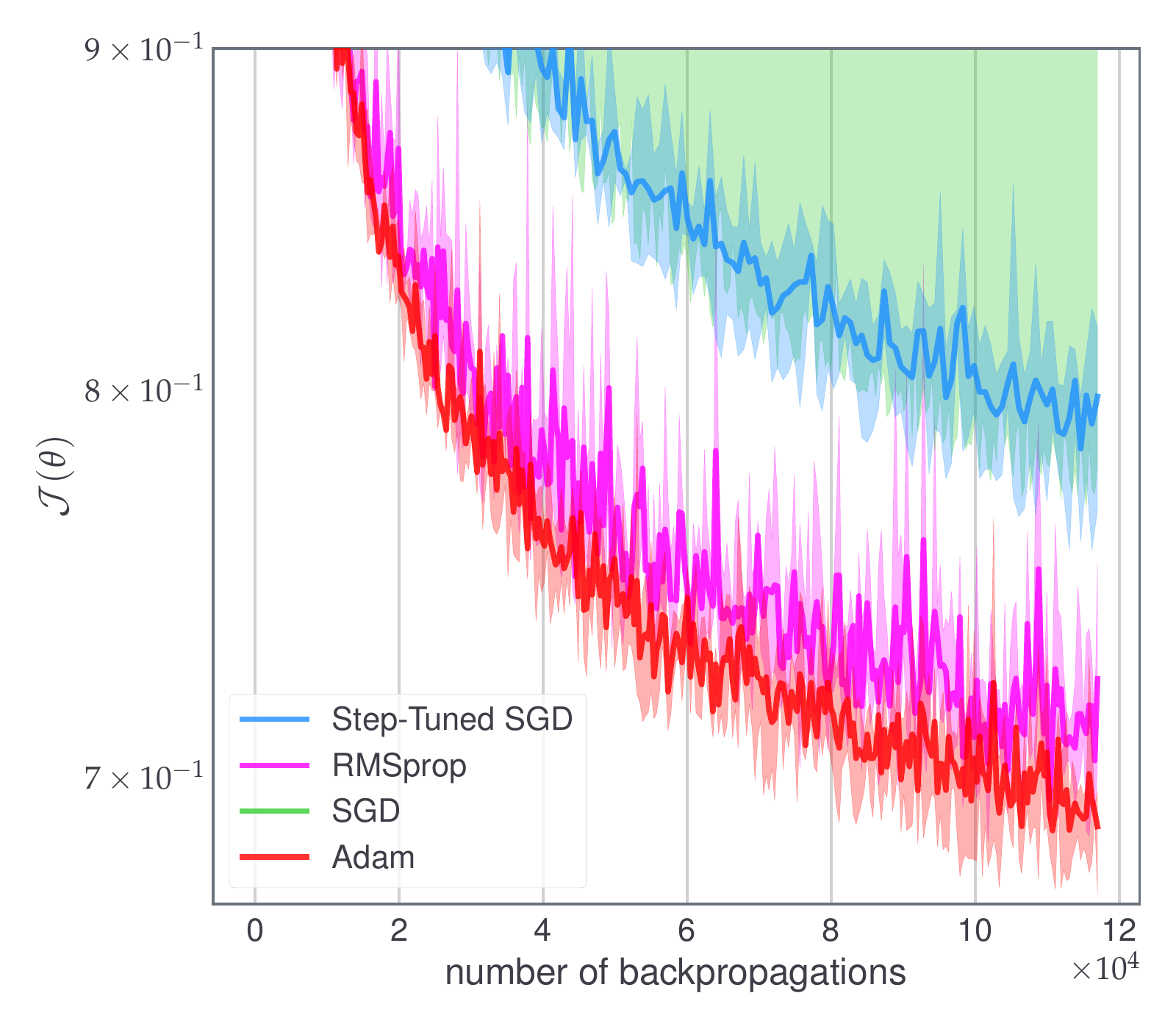}
            \end{minipage}%
            \begin{minipage}[c]{0.3\textwidth}
                \centering
                {\centering \scriptsize With BatchNorm (Problem (f)): training error}
                \includegraphics[ width=\linewidth]{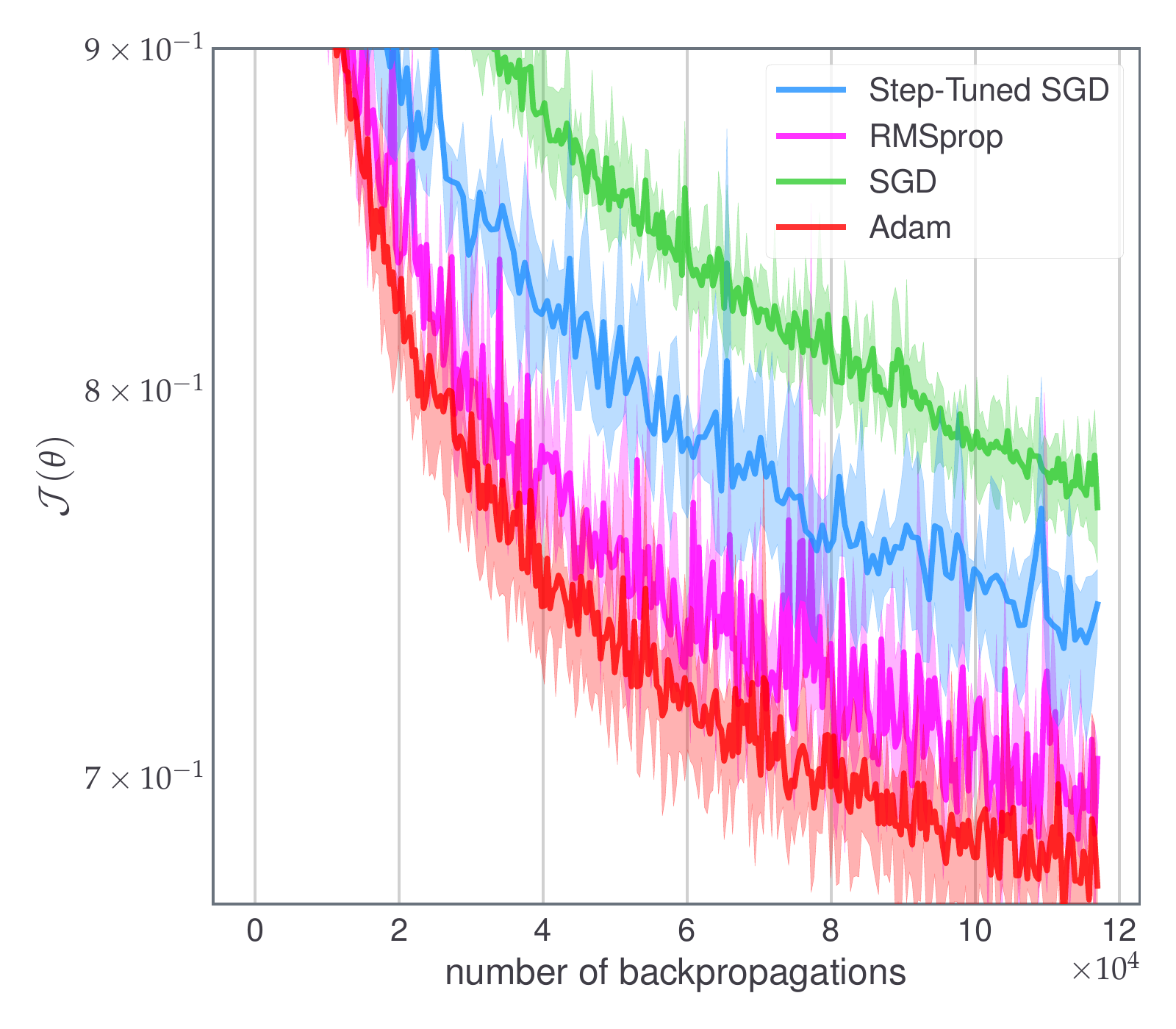}
            \end{minipage}%
            \caption{Classification of  CIFAR-10 with LeNet with and without batch normalization, corresponding to Problems (e) and (f) specified in Section~\ref{sec:expe} of the Supplementary. These experiments illustrate how batch normalization has a positive effect on Step-Tuned SGD.
            \label{fig::probEF}}
        \end{center}
    \end{figure}

    \section{Conclusion}
    We presented a new method to tune SGD's step-sizes for stochastic non-convex optimization within a  first-order computational framework.
    In addition to the new algorithm, we also presented a generic strategy (Section~ \ref{sec::deterministic} and \ref{sec::heuristic}) on how to use empirical and geometric considerations to address the major difficulty of preserving favorable behaviors of deterministic algorithms while dealing with mini-batches.  In particular, we tackled the problem of adapting the step-sizes to the local landscape of non-convex loss functions with noisy estimations.   For a computational cost similar to SGD, our method uses a step-size schedule changing every two iterations unlike other stochastic methods \`a la Barzilai-Borwein.
    Our algorithm comes with asymptotic convergence results and convergence rates.
    
    While our method does not alleviate hyper-parameter pre-tuning, it shows how an efficient \textit{automatic} fine-tuning of a \textit{simple} scalar step-size can improve the training of DNNs. Step-Tuned SGD processes data more slowly than other methods but by doing so manages to fine-tune step-sizes, leading to faster training in some DL problems with a typical sudden drop of the error rate at medium stages, especially on ResNets.

    \section*{Acknowledgements}
        The authors acknowledge the support of the European Research Council (ERC FACTORY-CoG-6681839), the Agence Nationale de la Recherche (ANR 3IA-ANITI, ANR-17-EURE-0010 CHESS, ANR-19-CE23-0017 MASDOL) and the Air Force Office of Scientific Research (FA9550-18-1-0226).
        
        Part of the numerical experiments were done using the OSIRIM platform of IRIT, supported by the CNRS, the FEDER, Région Occitanie and the French government (\url{http://osirim.irit.fr/site/en}). We thank the development teams of the following libraries that were used in the experiments: Python \citep{rossum1995python}, Numpy \citep{walt2011numpy}, Matplotlib \citep{hunter2007matplotlib}, PyTorch \citep{paszke2019pytorch}, and the PyTorch implementation of ResNets from \citet{Idelbayev18a}.

        We thank Emmanuel Soubies and Sixin Zhang for useful discussions and S\'ebastien Gadat for pointing out flaws in the original proof.
     \FloatBarrier
     \bibliographystyle{apalike}
     \bibliography{biblio.bib}

    \FloatBarrier
    \appendix 
    \section{Details about deep learning experiments}\label{sec:expe}
    In addition to the method described in Section~\ref{sec::settingDL}, we provide in Table~\ref{tab::expsummary} a summary of each problem considered.
        \begin{table}[h]
        \caption{Setting of the four different deep learning experiments.\label{tab::expsummary}}
        \ra{1.5}{}
        \renewcommand{\arraystretch}{1.6}
        \begin{tabular}{l|lll}\toprule
            &  \makecell{\textbf{Problem (a)}}  & \makecell{\textbf{Problem (b)}} & \makecell{\textbf{Problem (c)}} \\
            \midrule
            \makecell[l]{\textbf{Type}} & \makecell[c]{Classification}  & \makecell[c]{Classification}  & \makecell[c]{Classification}\\
            \makecell[l]{\textbf{Dataset}} & \makecell[c]{CIFAR-10}  & \makecell[c]{CIFAR-100 }& \makecell[c]{CIFAR-10} \\
            \makecell[l]{\textbf{Network}} & \makecell[c]{ResNet-20 \\(Residual)}  & \makecell[c]{ResNet-20 \\(Residual)} & \makecell[c]{Network-in-Network\\(Nested)} \\
            \makecell[l]{\textbf{BatchNorm}} & \makecell[c]{Yes} & \makecell[c]{Yes}& \makecell[c]{Yes} \\
            \makecell[l]{\textbf{Batch-size}} & \makecell[c]{$128$} & \makecell[c]{$128$} & \makecell[c]{$128$} \\
            \makecell[l]{\textbf{Activation functions}} & \makecell[c]{ReLU} & \makecell[c]{ReLU} & \makecell[c]{ELU} \\
            \makecell[l]{\textbf{Dissimilarity measure}} & \makecell[c]{Cross-entropy}  & \makecell[c]{Cross-entropy} & \makecell[c]{Cross-entropy}\\
            \makecell[l]{\textbf{Regularization}} &  \makecell[c]{$\lambda = 10^{-4}$} & \makecell[c]{$\lambda = 10^{-4}$} &  \makecell[c]{$\lambda = 10^{-4}$}\\
            \makecell[l]{\textbf{Grid-search}} & \makecell[c]{$50$ epochs} & \makecell[c]{$50$ epochs} & \makecell[c]{$30$ epochs}\\ 
            \makecell[l]{\textbf{Stop-criterion}} & \makecell[c]{$500$ epochs} & \makecell[c]{$500$ epochs} & \makecell[c]{$300$ epochs}\\
            \midrule
            &  \makecell{\textbf{Problem (d)}}& \makecell{\textbf{Problem (e)}} & \makecell{\textbf{Problem (f)}}\\
            \midrule
            \makecell[l]{\textbf{Type}} & \makecell[c]{Auto-encoder} & \makecell[c]{Classification}  & \makecell[c]{Classification}\\
            \makecell[l]{\textbf{Dataset}}  &\makecell[c]{MNIST} & \makecell[c]{CIFAR-10}  & \makecell[c]{CIFAR-10}\\
            \makecell[l]{\textbf{Network}}  & \makecell[c]{Auto-Encoder \\(Dense)} & \makecell[c]{LeNet\\(Convolutional)}  & \makecell[c]{LeNet \\(Convolutional)} \\
            \makecell[l]{\textbf{BatchNorm}} & \makecell[c]{No}& \makecell[c]{No} & \makecell[c]{Yes}\\
            \makecell[l]{\textbf{Batch-size}} & \makecell[c]{$128$} & \makecell[c]{$128$} & \makecell[c]{$128$}\\
            \makecell[l]{\textbf{Activation functions}} & \makecell[c]{SiLU} & \makecell[c]{ELU} & \makecell[c]{ELU}\\
            \makecell[l]{\textbf{Dissimilarity measure}} & \makecell[c]{Mean square} & \makecell[c]{Cross-entropy}  & \makecell[c]{Cross-entropy}\\
            \makecell[l]{\textbf{Regularization}} & \makecell[c]{$\lambda = 10^{-4}$} &  \makecell[c]{$\lambda = 10^{-4}$} & \makecell[c]{$\lambda = 10^{-4}$}\\
            \makecell[l]{\textbf{Grid-search}} & \makecell[c]{$50$ epochs} & \makecell[c]{$30$ epochs} & \makecell[c]{$30$ epochs}\\ 
            \makecell[l]{\textbf{Stop-criterion}} & \makecell[c]{$500$ epochs} & \makecell[c]{$300$ epochs} & \makecell[c]{$300$ epochs}\\
            \bottomrule
        \end{tabular}
        
    \end{table}
    
   In the DL experiments of Section~\ref{sec::numDL}, we display the training error and the test accuracy of each algorithm as a function of the number of stochastic gradient estimates computed. Due to their adaptive procedures, ADAM, RMSprop and Step-Tuned SGD have additional sub-routines in comparison to SGD. Thus, in Table~\ref{tab::wallclock} we additionally provide the wall-clock time per epoch of these methods relatively to SGD. Unlike the number of back-propagations performed, wall-clock time depends on many factors: the network and datasets considered, the computer used, and most importantly, the implementation. Regarding implementation, we would like to emphasize the fact that we used the versions of SGD, ADAM and RMSprop provided in \textit{PyTorch}, which are fully optimized (and in particular parallelized). Table~\ref{tab::wallclock} indicates that Step-Tuned SGD is slower than other adaptive methods for large networks but this is due to our non-parallel implementation. Actually on small networks (where the benefits of parallel computing is small), we observe that running Step-Tuned SGD for one epoch is actually faster than for SGD. 
   As a conclusion, the number of back-propagations is a more suitable metric for comparing the algorithms, and all methods considered require a single back-propagation per iteration.
   \begin{table}[t]
       \caption{Relative wall-clock time per epoch compared to SGD.\label{tab::wallclock}}
        \ra{1.5}{}
        \renewcommand{\arraystretch}{1.4}
        \begin{tabular}{l|cccccc}
        \toprule
            & Prob.(a) & Prob.(b) & Prob.(c) & Prob.(d) & Prob.(e) & Prob.(f)\\
            \midrule
            ADAM & 1.13 & 1.13 & 1.03 & 1.18 &1.04 & 1.00\\
            RMSprop & 1.06 & 1.08 & 1.02 &1.13 &1.00 & 1.01\\
            Step-Tuned SGD & 1.67 & 1.71 & 1.20 & 1.47 & 0.71 & 0.88\\
            \bottomrule
       \end{tabular}
   \end{table}

    \FloatBarrier
    \section{Proof of the theoretical results}\label{sec::supTheor}
    We state a lemma that we will use to prove Theorem~\ref{thm::mainres}.
    \subsection{Preliminary lemma}
    The result is the following.
    \begin{lemma}[{\citet[Proposition~2]{alber1998projected}}]\label{lem::gradconv}
        Let $(u_k)_{k\in\N}$ and $(v_k)_{k\in\N}$ two non-negative real sequences. Assume that $\sum_{k=0}^{+\infty} u_k v_k <+\infty$, and $\sum_{k=0}^{+\infty} v_k =+\infty$. If there exists a constant $C>0$ such that $\forall k\in\N, \vert u_{k+1} - u_k \vert\leq C v_k$, then $u_k\xrightarrow[k\to+\infty]{}0$.
    \end{lemma}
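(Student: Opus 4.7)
The plan is a two-step argument by contradiction: first establish that $\liminf_{k \to \infty} u_k = 0$ using only the incompatibility between $\sum_{k=0}^{+\infty} u_k v_k < +\infty$ and $\sum_{k=0}^{+\infty} v_k = +\infty$; then upgrade this $\liminf$ to a genuine limit by exploiting the increment bound $\vert u_{k+1} - u_k \vert \leq C v_k$, which prevents $(u_k)$ from repeatedly jumping back to a level bounded away from zero.

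For the first step, suppose $\liminf_k u_k > 0$. Then there exist $\eta > 0$ and $k_0 \in \N$ such that $u_k \geq \eta$ for every $k \geq k_0$, which gives $\sum_{k \geq k_0} u_k v_k \geq \eta \sum_{k \geq k_0} v_k = +\infty$, contradicting the summability hypothesis. Hence $\liminf_k u_k = 0$, so some subsequence of $(u_k)$ converges to $0$.

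For the second step, suppose for contradiction that $M := \limsup_k u_k > 0$, and extract an infinite subsequence $(k_j)$ with $u_{k_j} > 3M/4$. For each $j$, let $N_j \geq 0$ be the largest integer such that $C \sum_{i=k_j}^{k_j + N_j - 1} v_i \leq M/2$; this is finite because $\sum_i v_i = +\infty$ makes every tail of $(v_i)$ divergent. By telescoping and the increment bound, for any $k \in \{k_j, \ldots, k_j + N_j\}$,
\begin{equation*}
    u_k \; \geq \; u_{k_j} - C \sum_{i = k_j}^{k - 1} v_i \; \geq \; \frac{3M}{4} - \frac{M}{2} \; = \; \frac{M}{4},
\end{equation*}
while maximality of $N_j$ yields $\sum_{i = k_j}^{k_j + N_j} v_i > M/(2C)$. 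Combining these two bounds,
\begin{equation*}
    \sum_{k = k_j}^{k_j + N_j} u_k v_k \; \geq \; \frac{M}{4} \sum_{k = k_j}^{k_j + N_j} v_k \; > \; \frac{M^2}{8 C}.
\end{equation*}
A greedy extraction of a sub-subsequence of $(k_j)$ then produces pairwise disjoint blocks $[k_j, k_j + N_j]$ (each is finite in length, and $(k_j)$ is infinite, so one can always skip past the current block). Summing the lower bound above over this infinite disjoint collection forces $\sum_k u_k v_k = +\infty$, contradicting the hypothesis. Therefore $\limsup_k u_k = 0$, and combined with $u_k \geq 0$ this yields $u_k \to 0$.

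The main obstacle I anticipate is the clean definition and handling of $N_j$ in the second step: one must verify that it is finite (using the divergence of tails of $\sum v_i$), that the lower bound $u_k \geq M/4$ holds uniformly on the whole block $[k_j, k_j + N_j]$, and that the greedy extraction still leaves infinitely many disjoint blocks. Once these pieces are in place, the two hypotheses on $(v_k)$ and $(u_k v_k)$ are forced to clash, delivering the contradiction.
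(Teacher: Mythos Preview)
Your proof is correct. The paper itself does not supply a proof of this lemma; it merely states the result with a citation to \citet[Proposition~2]{alber1998projected} and uses it as a black box in the proof of Theorem~\ref{thm::mainres}. So there is no ``paper's own proof'' to compare against.

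For completeness, your two-step argument is the standard one. Step~1 (the $\liminf$) is immediate. In Step~2, the definition of $N_j$ is sound: the empty-sum convention makes $N_j=0$ admissible, divergence of every tail of $\sum v_i$ guarantees $N_j<\infty$, and the telescoping bound together with maximality gives both $u_k\geq M/4$ on the block and $\sum_{i=k_j}^{k_j+N_j} v_i > M/(2C)$. The greedy extraction of disjoint blocks is routine since each block has finite length while $k_j\to\infty$. All pieces are in place and the contradiction follows.
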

    \subsection{Proof of the main theorem}
    We can now prove Theorem~\ref{thm::mainres}.
    \begin{proof}[Proof of Theorem~\ref{thm::mainres}]
        We first clarify the random process induced by the draw of the mini-batches. Algorithm~\ref{alg::Steptuned} takes a sequence of mini-batches as input. This sequence is represented by the random variables $(\mb_k)_{k\in\N}$ as described in Section~\ref{sec::mini-batch}. Each of these random variables is independent of the others. In particular, for $k\in\N_{>0}$, $\mb_k$ is independent of the previous mini-batches $\mb_0,\ldots, \mb_{k-1}$. For convenience, we will denote $\umb_k = \left\{\mb_0,\ldots,\mb_k\right\}$, the mini-batches up to iteration $k$. Due to the randomness of the mini-batches, the algorithm is a random process as well. As such, $\tk$ is a random variable with a deterministic dependence on $\umb_{k-1}$ and is independent of $\mb_k$. However, $\tkh$ and $\mb_{k}$ are not independent. Similarly, we constructed $\gamma_k$ such that it is a random variable with a deterministic dependence on $\umb_{k-1}$, which is independent of $\mb_k$. This dependency structure will be crucial to derive and bound conditional expectations. Finally, we highlight the following important identity, for any $k\in \N_{>0}$,
        \begin{equation}
            \espcondTot{\nJ_{\mb_k}(\tk)} = \nJ(\tk).
        \end{equation}
        Indeed, the iterate $\tk$ is a deterministic function of $\umb_{k-1}$, so taking the expectation over $\mb_k$, which is independent of $\umb_{k-1}$, we recover the full gradient of $\J$ as the distribution of $\mb_k$ is the same as that of $\ms$ in Section \ref{sec::mini-batch}. Notice in addition that a similar identity does not hold for $\tkh$ (as it depends on $\mb_k$).
        
        We now provide estimates that will be used extensively in the rest of the proof.
        The gradient of the loss function $\nJ$ is locally Lipschitz continuous as $\J$ is twice continuously differentiable. By assumption, there exists a compact convex set $\mathsf{C}\subset\R^P$, such that with probability $1$, the sequence of iterates $(\tk)_{k\in\Nhalf}$ belongs to $\mathsf{C}$. Therefore, by local Lipschitz continuity, the restriction of $\nJ$ to $\mathsf{C}$ is Lipschitz continuous on $\mathsf{C}$. Similarly, each $\nJ_n$ is also Lipschitz continuous on $\mathsf{C}$. We denote by $L>0$ a Lipschitz constant common to each $\nJ_n$, $n=1,\ldots, N$. Notice that the Lipschitz continuity is preserved by averaging, in other words,
        \begin{equation}\label{eq::Lip}
            \forall \mb\subseteq\left\{1,\ldots,N\right\},\forall\psi_1,\psi_2\in\mathsf{C}, \quad\Vert \nabla\J_\mb(\psi_1) -\nabla\J_\mb(\psi_2) \Vert \leq  L\Vert \psi_1-\psi_2\Vert.
        \end{equation}
        In addition, using the continuity of the $\nabla\J_n$'s, there exists a constant $C_2>0$, such that,
        \begin{equation}\label{eq::C2}
            \forall \mb\subseteq\left\{1,\ldots,N\right\},\forall\psi\in\mathsf{C}, \quad\Vert \nJ_\mb(\psi)\Vert \leq  C_2.
        \end{equation}
        Finally, for a function $g:\R^P\to\R$ with $L$-Lipschitz continuous gradient, we recall the following inequality called descent lemma (see for example \citet[Proposition~A.24]{bertsekas1998nonlinear}). 
        For any $\theta\in\R^P$ and any $d\in \R^P$,
        \begin{equation}
            g(\theta+d) \leq g(\theta) + \langle \nabla g(\theta), d\rangle + \frac{L}{2}\Vert d \Vert^2.
        \end{equation}
        In our case since we only have the $L$-Lipschitz continuity of $\nJ$ on $\mathsf{C}$ which is convex, we have a similar bound for $\nJ$ on $\mathsf{C}$: for any $\theta\in\mathsf{C}$ and any $d\in \R^P$ such that $\theta+d\in\mathsf{C}$,
        \begin{equation}\label{eq::genDescent}
            \J(\theta+d) \leq \J(\theta) + \langle \nJ(\theta), d\rangle + \frac{L}{2}\Vert d \Vert^2.
        \end{equation}
        
        Let $\theta_0\in\R^P$ and let $(\tk)_{k\in\Nhalf}$ a sequence generated by Algorithm~\ref{alg::Steptuned} initialized at $\theta_0$. By assumption this sequence belongs to $\mathsf{C}$ almost surely.
        To simplify, for $k\in\N$, we denote $\eta_k = \alpha\gamma_k (k+1)^{-(1/2+\delta)}$.  Fix an iteration $k\in\N$, we can use \eqref{eq::genDescent} with $\theta = \theta_k$ and $d = -\eta_k\nJ_{\mb_{k}}(\theta_k)$, almost surely (with respect to the boundedness assumption),
        \begin{equation}\label{eq::AppliedDescent}
            \J(\tkh) \leq \J(\theta_k) - \eta_k \langle \nJ(\theta_k), \nJ_{\mb_{k}} (\theta_k) \rangle + \frac{\eta_k^2}{2}L \Vert \nJ_{\mb_{k}}(\theta_k)\Vert^2.
        \end{equation}
        Similarly with $\theta = \tkh$ and $d = -\eta_k\nJ_{\mb_{k}}(\tkh)$, almost surely,
        \begin{equation}\label{eq::AppliedDescent2}
            \J(\tkp) \leq \J(\tkh) - \eta_k \langle \nJ(\tkh), \nJ_{\mb_{k}} (\tkh) \rangle + \frac{\eta_k^2}{2}L \Vert \nJ_{\mb_{k}}(\tkh)\Vert^2.
        \end{equation}
        We combine \eqref{eq::AppliedDescent} and \eqref{eq::AppliedDescent2}, almost surely,
        \begin{align}\label{eq::AppliedDescent3}
        \begin{split}
            \J(\tkp) \leq \J(\tk) &- \eta_k \left(\langle \nJ(\theta_k), \nJ_{\mb_{k}} (\theta_k)\rangle + \langle \nJ(\tkh), \nJ_{\mb_{k}} (\tkh) \rangle  \right)\\&+ \frac{\eta_k^2}{2}L \left(\Vert \nJ_{\mb_{k}}(\theta_k)\Vert^2+ \Vert \nJ_{\mb_{k}}(\tkh)\Vert^2\right).
        \end{split}
        \end{align}
        Using the boundedness assumption and \eqref{eq::C2}, almost surely,
        \begin{equation}
         \Vert \nJ_{\mb_{k}}(\theta_k)\Vert^2 \leq C_2 \quad \text{and}\quad \Vert \nJ_{\mb_{k}}(\tkh)\Vert^2 \leq C_2.
        \end{equation}
        So almost surely,
        \begin{equation}\label{eq::AppliedDescent4}
            \J(\tkp) \leq \J(\tk) - \eta_k \left(\langle \nJ(\theta_k), \nJ_{\mb_{k}} (\theta_k)\rangle + \langle \nJ(\tkh), \nJ_{\mb_{k}} (\tkh) \rangle  \right) + \eta_k^2L C_2.
        \end{equation}
        Then, we take the conditional expectation of \eqref{eq::AppliedDescent4} over $\mb_k$ conditionally on $\umb_{k-1}$ (the mini-batches used up to iteration $k-1$), we have,
        \begin{align}\label{eq::espDescent}
        \begin{split}
            \espcondTot{\J(\tkp)} &\leq \espcondTot{\J(\tk)} + \espcondTot{\eta_k^2L C_2} \\&- \espcondTot{\eta_k \left(\langle \nJ(\theta_k), \nJ_{\mb_{k}} (\theta_k)\rangle + \langle \nJ(\tkh), \nJ_{\mb_{k}} (\tkh) \rangle  \right)}.
        \end{split}
        \end{align}
        As explained at the beginning of the proof, $\tk$ is a deterministic function of  $\umb_{k-1}$, thus, $\espcondTot{\J(\tk)} = \J(\tk)$. Similarly, by construction $\eta_k$ is independent of the current mini-batch $\mb_k$, it is a deterministic function of $\umb_{k-1}$. Hence, \eqref{eq::espDescent} reads,
        \begin{align}\label{eq::espDescent2}
        \begin{split}
            \espcondTot{\J(\tkp)} \leq& \J(\tk) + \eta_k^2L C_2 - \eta_k \langle \nJ(\theta_k), \espcondTot{\nJ_{\mb_{k}} (\theta_k)}\rangle\\& - \eta_k\espcondTot{\langle \nJ(\tkh), \nJ_{\mb_{k}} (\tkh) \rangle}  .
        \end{split}
        \end{align}
        Then, we use the fact that $\espcondTot{\nJ_{\mb_{k}} (\theta_k)} =\nJ (\theta_k) $. Overall, we obtain, 
        \begin{align}\label{eq::espDescent3}
        \begin{split}
            \espcondTot{\J(\tkp)} \leq& \J(\tk)+ \eta_k^2L C_2 -  \eta_k \Vert\nJ(\theta_k)\Vert^2 \\& - \eta_k\espcondTot{\langle \nJ(\tkh), \nJ_{\mb_{k}} (\tkh) \rangle}.
        \end{split}
        \end{align}
        We will now bound the last term of \eqref{eq::espDescent3}. First we write,
        \begin{align}\label{eq::peskyterm}
        \begin{split}
            &-\langle \nJ(\tkh), \nJ_{\mb_{k}} (\tkh) \rangle\\
            &=-\langle \nJ(\tkh), \nJ_{\mb_{k}} (\tkh)- \nJ_{\mb_{k}} (\tk)\rangle - \langle \nJ(\tkh), \nJ_{\mb_{k}} (\tk) \rangle.
        \end{split}
        \end{align}
        Using the Cauchy-Schwarz inequality, as well as \eqref{eq::Lip} and \eqref{eq::C2}, almost surely,
        \begin{align}
        \begin{split}
            |\langle \nJ(\tkh), \nJ_{\mb_{k}} (\tkh)- \nJ_{\mb_{k}} (\tk)\rangle| &\leq \Vert \nJ(\tkh) \Vert \Vert \nJ_{\mb_{k}} (\tkh)- \nJ_{\mb_{k}} (\tk)\Vert
            \\&\leq \Vert \nJ(\tkh) \Vert L\Vert \tkh-\tk\Vert
            \\& \leq \Vert \nJ(\tkh) \Vert L\Vert -\eta_k\nJ_{\mb_k}(\tk)\Vert
            \\& \leq LC_2^2\eta_k.
        \end{split}
        \end{align}
        Hence,
        \begin{align}\label{eq::peskyterm2}
        \begin{split}
            &-\langle \nJ(\tkh), \nJ_{\mb_{k}} (\tkh) \rangle \leq LC_2^2\eta_k - \langle \nJ(\tkh), \nJ_{\mb_{k}} (\tk) \rangle.
        \end{split}
        \end{align}
        We perform similar computations on the last term of \eqref{eq::peskyterm2}, almost surely
        \begin{align}\label{eq::peskyterm21}
        \begin{split}
            &-\langle \nJ(\tkh), \nJ_{\mb_{k}} (\tk) \rangle
            \\& = -\langle \nJ(\tkh)-\nJ(\tk), \nJ_{\mb_{k}} (\tk) \rangle - \langle \nJ(\tk), \nJ_{\mb_{k}} (\tk) \rangle
            \\& \leq \Vert \nJ(\tkh)-\nJ(\tk)\Vert \Vert \nJ_{\mb_{k}} (\tk) \Vert - \langle \nJ(\tk), \nJ_{\mb_{k}} (\tk) \rangle
            \\& \leq LC_2\Vert \tkh-\tk\Vert - \langle \nJ(\tk), \nJ_{\mb_{k}} (\tk) \rangle
            \\&\leq LC_2^2\eta_k - \langle \nJ(\tk), \nJ_{\mb_{k}} (\tk) \rangle.
        \end{split}
        \end{align}
        Finally we obtain by combining \eqref{eq::peskyterm}, \eqref{eq::peskyterm2} and \eqref{eq::peskyterm21}, almost surely,
        \begin{align}\label{eq::peskyterm3}
        \begin{split}
            &-\langle \nJ(\tkh), \nJ_{\mb_{k}} (\tkh) \rangle
            \leq 2LC_2^2\eta_k - \langle \nJ(\tk), \nJ_{\mb_{k}} (\tk) \rangle.
        \end{split}
        \end{align}
        Going back to the last term of \eqref{eq::espDescent3}, we have, taking the conditional expectation of \eqref{eq::peskyterm3}, almost surely
        \begin{align}\label{eq::peskyterm4}
        \begin{split}
            - &\eta_k\espcondTot{\langle \nJ(\tkh), \nJ_{\mb_{k}} (\tkh) \rangle}\\
            &\leq 2 L C_2^2\eta_k^2 - \eta_k \espcondTot{\langle \nJ(\tk), \nJ_{\mb_{k}} (\tk) \rangle}
            \\&\leq 2 L C_2^2\eta_k^2 - \eta_k \langle \nJ(\tk), \espcondTot{\nJ_{\mb_{k}} (\tk)}\rangle
            \\&= 2 L C_2^2\eta_k^2 - \eta_k  \Vert\nJ(\tk)\Vert^2.
        \end{split}
        \end{align}
        In the end we obtain, for an arbitrary iteration $k\in \N$, almost surely
        \begin{align}\label{eq::espDescent4}
        \begin{split}
            \espcondTot{\J(\tkp)} \leq& \J(\tk) - 2\eta_k \Vert\nJ(\theta_k)\Vert^2 + \eta_k^2L(C_2+2C_2^2).
        \end{split}
        \end{align}
        To simplify we assume that $\tilde M\geq\nu$ (otherwise set $\tilde M = \max(\tilde M,\nu)$). We use the fact that, $\eta_k\in[\frac{\alpha\tilde m}{(k+1)^{1/2+\delta}},\frac{\alpha\tilde M}{(k+1)^{1/2+\delta}}]$, to obtain almost surely,
        \begin{align}\label{eq::espcond3}
        \begin{split}
            \espcondTot{\J(\tkp)} \leq& \J(\tk) - 2\frac{\alpha\tilde{m}}{(k+1)^{1/2+\delta}} \Vert\nJ(\theta_k)\Vert^2 + \frac{\alpha^2\tilde{M}^2}{(k+1)^{1+2\delta}}L(C_2+2C_2^2).
        \end{split}
        \end{align}
    Since by assumption, the last term is summable, we can now invoke Robbins-Siegmund convergence theorem \cite{robbins1971convergence} to obtain that, almost surely, $(\J(\tk))_{k\in\N}$ converges and,
    \begin{equation}\label{eq::cvsumgrad}
        \sum_{k=0}^{+\infty}\frac{1}{(k+1)^{1/2+\delta}}\Vert \nJ(\theta_k) \Vert^2 < + \infty.
    \end{equation}
    Since $\sum_{k=0}^{+\infty}\frac{1}{(k+1)^{1/2+\delta}}=+\infty$, this implies at least that almost surely, 
    \begin{equation}
        \liminf_{k\to\infty}\Vert \nJ(\theta_k) \Vert^2=0.
    \end{equation}
    To prove that in addition $\displaystyle\lim_{k\to\infty}\Vert\nJ(\theta_k)\Vert^2 = 0$, we will use Lemma~\ref{lem::gradconv} with $u_k = \Vert\nJ(\theta_{k})\Vert^2$ and $v_k = \frac{1}{(k+1)^{1/2+\delta}}$, for all $k\in\N$. So we need to prove that there exists $C_3>0$ such that $\vert u_{k+1} - u_k\vert \leq C_3 v_k$. 
    To do so, we use the $L$-Lipschitz continuity of the gradients on $\mathsf{C}$, triangle inequalities and \eqref{eq::C2}. It holds, almost surely, for all $k \in \N$
    \begin{align}
    \begin{split}
        &\left\vert\Vert\nJ(\theta_{k+1})\Vert^2-\Vert\nJ(\theta_{k})\Vert^2\right\vert\\
        =& \;\left(\;\Vert\nJ(\theta_{k+1})\Vert+ \Vert\nJ(\theta_{k})\Vert\;\right)\;\times\;\left\vert\;\Vert\;\nJ(\theta_{k+1})\Vert- \Vert\nJ(\theta_{k})\;\Vert\;\right\vert\\
        \leq& 2C_2 \left\vert\Vert\nJ(\theta_{k+1})\Vert- \Vert\nJ(\theta_{k})\Vert\right\vert\\
        \leq& 2C_2 \Vert\nJ(\theta_{k+1})-\nJ(\theta_{k})\Vert\\
        \leq& 2C_2 L \Vert\theta_{k+1}-\theta_{k}\Vert\\
        \leq& 2C_2 L \left\Vert-\eta_k\nJ_{\mb_{k}}(\theta_k)-\eta_k\nJ_{\mb_{k}}(\tkh)\right\Vert\\
        \leq& 2C_2 L\frac{\alpha\tilde{M}}{(k+1)^{1/2+\delta}} \Vert\nJ_{\mb_{k}}(\theta_k)+\nJ_{\mb_{k}}(\tkh)\Vert\\
        \leq& 4C_2^2 L\frac{\alpha\tilde{M}}{(k+1)^{1/2+\delta}}.
    \end{split}
    \end{align}
    So taking $C_3 =4C_2^2 L\alpha\tilde{M} $, by Lemma~\ref{lem::gradconv}, almost surely, $\lim_{k\to+\infty} \Vert\nJ(\theta_{k})\Vert^2=0$. This concludes the almost sure convergence proof.
    
    As for the rate, consider  the expectation of  \eqref{eq::espcond3} (with respect to the random variables $(\mb_k)_{k\in\N}$).
    The tower property of the conditional expectation gives,
    $$ \mathbb{E}[\mathbb{E}[\J(\theta_{k+1})|\umb_{k-1}]]=\esp{\J(\theta_{k+1})},$$
     so we obtain, for all $k\in\N$,
    \begin{align}\label{eq::esptot}
    \begin{split}
    2\frac{\alpha\tilde{m}}{(k+1)^{1/2+\delta}}\esp{\Vert \nJ(\theta_k)\Vert^2} \leq& \esp{\J(\theta_k)}  -  \esp{\J(\theta_{k+1})} + \frac{\alpha^2\tilde{M}^2}{(k+1)^{1+2\delta}}L(C_2+2C_2^2).
    \end{split}
    \end{align}

    Then for $K\geq 1$, we sum from $0$ to $K-1$,
    \begin{align}\label{eq::sumesp1}
    \begin{split}
     \sum_{k=0}^{K-1}2\frac{\alpha\tilde{m}}{(k+1)^{1/2+\delta}}&\esp{\Vert \nJ(\theta_k)\Vert^2}\\
     &\leq \sum_{k=0}^{K-1}\esp{\J(\theta_k)} -\sum_{k=0}^{K-1}\esp{\J(\theta_{k+1})} + \sum_{k=0}^{K-1}\frac{\alpha^2\tilde{M}^2}{(k+1)^{1+2\delta}}L(C_2+2C_2^2)\\
     &=\J(\theta_0) - \mathbb{E}\left[\J(\theta_{K})\right] + \sum_{k=0}^{K-1}\frac{\alpha^2\tilde{M}^2}{(k+1)^{1+2\delta}}L(C_2+2C_2^2)\\
      &\leq  \J(\theta_0) - \inf_{\psi\in\R^P}\J(\psi) + \sum_{k=0}^{K-1}\frac{\alpha^2\tilde{M}^2}{(k+1)^{1+2\delta}}L(C_2+2C_2^2),\
    \end{split}
    \end{align}
   The right-hand side is finite, so there is a constant $C_4>0$ such that for any $K\in\N$, it holds,
    \begin{multline}\label{eq::rate}
        C_4\geq \sum_{k=0}^K \frac{1}{(k+1)^{1/2+\delta}} \esp{\Vert\nJ(\theta_{k})\Vert^2} \geq \min_{k\in\left\{1,\ldots,K\right\}}\esp{\Vert\nJ(\theta_{k})\Vert^2}\sum_{k=0}^K \frac{1}{(k+1)^{1/2+\delta}} \\ \geq \left(K+1\right)^{1/2-\delta}\min_{k\in\left\{1,\ldots,K\right\}} \esp{\Vert\nJ(\theta_{k})\Vert^2},
    \end{multline}
    and we obtain the rate.
\end{proof}

    \subsection{Proof of the corollary}
    Before proving the corollary we recall the following result.
    \begin{lemma}\label{lem::Lipschitzbounded}
        Let $g:\R^P\to\R$ a $L$-Lipschitz continuous and differentiable function. Then $\nabla g$ is uniformly bounded on $\R^P$.
    \end{lemma}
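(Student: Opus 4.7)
The plan is to obtain the bound $\Vert\nabla g(\theta)\Vert \leq L$ pointwise for every $\theta\in\R^P$ directly from the definition of the Lipschitz constant, using the gradient as a directional derivative. The argument is completely elementary and I do not anticipate a significant obstacle, but I will record the few lines carefully.

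First I would fix an arbitrary $\theta\in\R^P$ and an arbitrary unit vector $v\in\R^P$ with $\Vert v\Vert=1$. The $L$-Lipschitz hypothesis yields, for every $t\in\R\setminus\{0\}$,
\begin{equation*}
\left\vert \frac{g(\theta+tv)-g(\theta)}{t} \right\vert \leq L.
\end{equation*}
Since $g$ is differentiable at $\theta$, the left-hand side converges to $\langle \nabla g(\theta), v\rangle$ as $t\to 0$, so passing to the limit gives $\vert\langle \nabla g(\theta), v\rangle\vert \leq L$.

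Next I would upgrade this to a norm bound on $\nabla g(\theta)$. If $\nabla g(\theta)=0$ the claim is trivial; otherwise I would specialize the inequality to the unit vector $v=\nabla g(\theta)/\Vert\nabla g(\theta)\Vert$, which yields $\Vert \nabla g(\theta)\Vert \leq L$. Equivalently one can take the supremum of $\vert\langle \nabla g(\theta), v\rangle\vert$ over all unit $v$ and invoke the identity $\Vert\nabla g(\theta)\Vert = \sup_{\Vert v\Vert=1}\vert\langle \nabla g(\theta), v\rangle\vert$. Since $\theta$ was arbitrary, this shows $\sup_{\theta\in\R^P}\Vert\nabla g(\theta)\Vert\leq L$, which is exactly the uniform boundedness claim. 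As noted above, no step here is delicate; the content of the lemma is essentially the observation that the Lipschitz constant controls all directional derivatives.
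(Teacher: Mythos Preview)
Your argument is correct and completely standard. The paper itself gives no proof of this lemma---it simply states it as a recalled fact before proving Corollary~\ref{cor::globLipschitz}---so there is nothing to compare against; your elementary directional-derivative argument is exactly the kind of proof one would supply.
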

    We can now prove the corollary.
    \begin{proof}[Proof of Corollary~\ref{cor::globLipschitz}]
    The proof is very similar to the one of Theorem~\ref{thm::mainres}. Denote $L$ the Lipschitz constant of $\nJ$. Then, the descent lemma \eqref{eq::AppliedDescent} holds surely.
    Furthermore, since for all $n\in\{1,\ldots,N\}$, each $\J_n$ is Lipschitz, so is $\J$, and globally Lipschitz functions have uniformly bounded gradients so $\nJ$ has bounded gradient. This is enough to obtain \eqref{eq::espcond3}. Similarly, at iteration $k\in\N$, $\esp{\Vert \nJ_{\mb_{k}} (\theta_k)\Vert}$ is also uniformly bounded. Overall these arguments allows to follow the lines of the proof of Theorem~\ref{thm::mainres} and the same conclusions follow by repeating the same arguments. %
\end{proof}

    \section{Details on the synthetic experiments \label{sec::numCD}}
    We detail the non-convex regression problem that we presented in Figure~\ref{fig::bbvsnc} and~\ref{fig::CD_MB}.
    Given a matrix $A\in\R^{N \times P}$ and a vector $b\in\R^N$, denote $A_n$ the n-\textit{th} line of $A$. The problem consists in minimizing a loss function of the form,
    \begin{equation}\label{eq::CDJ}
        \theta\in \R^P\mapsto \J(\theta) = \frac{1}{N}\sum_{n}^{N} \phi(A_n^T\theta-b_n),
    \end{equation} where the non-convexity comes from the function $t\in\R\mapsto\phi(t) = t^2/(1+t^2)$. For more details on the initialization of $A$ and $b$ we refer to \citet{carmon2017convex} where this problem is initially proposed. 
    In the experiments of Figure~\ref{fig::CD_MB}, the mini-batch approximation was made by selecting a subset of the lines of $A$, which amounts to computing only a few terms of the full sum in \eqref{eq::CDJ}. We used $N=500$, $P=30$ and mini-batches of size $50$. 
    
    In the deterministic setting we ran each algorithm during $250$ iterations and selected the hyper-parameters of each algorithm such that they achieved $\vert\J(\theta)-\J^\star\vert<10^{-1}$ as fast as possible. 
    In the mini-batch experiments we ran each algorithm during $250$ epochs and selected the hyper-parameters that yielded the smallest value of $\J(\theta)$ after $50$ epochs.
    \section{Description of auxiliary algorithms}\label{supp::otherAlgs}
    We precise the heuristic algorithms used in Figure~\ref{fig::CD_MB} and discussed in Section~\ref{sec::heuristic}. Note that the step-size in Algorithm~\ref{alg::ExpectedGV} is equivalent to \ref{eq::algoExpe2} but is written differently to avoid storing an additional gradient estimate.
    \begin{algorithm}[H]
        \caption{Stochastic-GV SGD}
        \begin{algorithmic}[1]
            
            \STATE{\bfseries Input:} $\alpha>0$, $\nu>0$
            \STATE{\bfseries Input:} $\tilde{m}>0$, $\tilde{M}>0$, $\delta\in(0
            ,1/2)$
            \STATE{\bfseries Initialize} $\theta_0\in\R^P$, $\gamma_0=1$
            \STATE{\bfseries Draw} mini-batches $(\mb_k)_{k\in\N}$ independently and uniformly at random with replacement.
            
            \STATE $\theta_1 = \theta_0 - \alpha\gamma_0\nabla\J_{\mb_0}(\theta_0)$
            \FOR{$k= 1,\ldots$}

            \STATE $\dtk=\theta_{k}-\theta_{k-1}$
            \STATE $\dgk^{\mathrm{naive}}=\nJ_{\mb_{k}}(\tk)-\nJ_{\mb_{k-1}}(\tkm)$
            \IF{$\langle \dgk^{\mathrm{naive}},\dtkh\rangle>0$}
            \STATE $\gamma_{k}=\frac{\Vert\dtk\Vert^2}{\langle \dgk^{\mathrm{naive}},\dtk\rangle}$
            \ELSE
            \STATE $\gamma_{k}=\nu$
            \ENDIF 
            \STATE $\gamma_{k}=\min(\max(\gamma_{k},\tilde{m}),\tilde{M})$
            
            \STATE $\theta_{k+1}=\theta_{k} - \frac{\alpha}{(k+1)^{1/2+\delta}}\gamma_{k} \nJ_{\mb_k}(\theta_{k})$
            \ENDFOR
        \end{algorithmic}
    \end{algorithm}
    \begin{algorithm}[H]
        \caption{Exact-GV SGD}
        \begin{algorithmic}[1]
            
            \STATE{\bfseries Input:} $\alpha>0$, $\nu>0$
            \STATE{\bfseries Input:} $\tilde{m}>0$, $\tilde{M}>0$, $\delta\in(0
            ,1/2)$
            \STATE{\bfseries Initialize} $\theta_0\in\R^P$, $\gamma_0=1$
            \STATE{\bfseries Draw} mini-batches $(\mb_k)_{k\in\N}$ independently and uniformly at random with replacement.
            \STATE $\theta_1 = \theta_0 - \alpha\gamma_0\nabla\J_{\mb_0}(\theta_0)$
            \FOR{$k=1,\ldots$}

            \STATE $\dtk=\theta_{k}-\theta_{k-1}$
            \STATE $G_{k}=\nJ(\tk)-\nJ(\tkm)$
            \IF{$\langle {G}_{k},\dtkh\rangle>0$}
            \STATE $\gamma_{k}=\frac{\Vert\dtk\Vert^2}{\langle {G}_{k},\dtk\rangle}$
            \ELSE
            \STATE $\gamma_{k}=\nu$
            \ENDIF 
            \STATE $\gamma_{k}=\min(\max(\gamma_{k},\tilde{m}),\tilde{M})$
            
            \STATE $\theta_{k+1}=\theta_{k} - \frac{\alpha}{(k+1)^{1/2+\delta}}\gamma_{k} \nJ_{\mb_k}(\theta_{k})$
            \ENDFOR
        \end{algorithmic}
    \end{algorithm}
    \begin{algorithm}[H]
        \caption{Expected-GV SGD\label{alg::ExpectedGV}}
        \begin{algorithmic}[1]
            
            \STATE{\bfseries Input:} $\alpha>0$, $\nu>0$
            \STATE{\bfseries Input:} $\tilde{m}>0$, $\tilde{M}>0$, $\delta\in(0
            ,1/2)$
            \STATE{\bfseries Initialize} $\theta_0\in\R^P$, $\gamma_0=1$
            \STATE{\bfseries Draw} mini-batches $(\mb_k)_{k\in\N}$ independently and uniformly at random with replacement.
            \STATE $\theta_1 = \theta_0 - \alpha\gamma_0\nabla\J_{\mb_0}(\theta_0)$
            \FOR{$k=1,\ldots$}

            \STATE $\dtk=\theta_{k}-\theta_{k-1}$
            \STATE $G_{k}=-\frac{\alpha}{(k-1)^{1/2+\delta}}\gamma_{k-1}\esp{\CJBkm(\theta_{k-1})}$
            \IF{$\langle {G}_{k},\dtkh\rangle>0$}
            \STATE $\gamma_{k}=\frac{\Vert\dtk\Vert^2}{\langle {G}_{k},\dtk\rangle}$
            \ELSE
            \STATE $\gamma_{k}=\nu$
            \ENDIF 
            \STATE $\gamma_{k}=\min(\max(\gamma_{k},\tilde{m}),\tilde{M})$
            
            \STATE $\theta_{k+1}=\theta_{k} - \frac{\alpha}{(k+1)^{1/2+\delta}}\gamma_{k} \nJ_{\mb_k}(\theta_{k})$
            \ENDFOR
        \end{algorithmic}
    \end{algorithm}

\end{document}